\documentclass{article}

\usepackage{hyperref} 
\usepackage{url}              

\usepackage{algorithm}
\usepackage{algpseudocode}

 \usepackage[dvipsnames]{xcolor}

 \usepackage[preprint]{neurips_2026}

\usepackage[utf8]{inputenc}   
\usepackage[T1]{fontenc}      
\usepackage{microtype}        

\usepackage{amsmath, amssymb, amsthm, amsfonts} 
\usepackage{mathtools}        
\usepackage{bm}               
\usepackage{nicefrac}         

\usepackage{booktabs}         
\usepackage{array}            
\usepackage{multirow}         
\usepackage{arydshln}         
\usepackage{wrapfig}          

\DeclareMathOperator*{\argmax}{arg\,max}

\theoremstyle{plain}
\newtheorem{theorem}{Theorem}[section]

\theoremstyle{definition}
\newtheorem{definition}[theorem]{Definition}

\theoremstyle{remark}

\title{Multilingual Fine-Tuning via\\Localized Gradient Conflict Resolution}

\author{
Long P. Hoang\textsuperscript{1} \quad 
Yiran Zhao\textsuperscript{2} \quad
Wei Lu\textsuperscript{3} \quad 
Wenxuan Zhang\textsuperscript{1} \\
\textsuperscript{1}Singapore University of Technology and Design \\ \textsuperscript{2}Salesforce AI Research \quad
\textsuperscript{3}Nanyang Technological University \\
\texttt{long\_hoang@mymail.sutd.edu.sg} \quad \texttt{yiran.zhao@salesforce.com} \\ \texttt{wei.lu@ntu.edu.sg} \quad \texttt{wxzhang@sutd.edu.sg}
}

\makeatletter
\def\adl@drawiv#1#2#3{%
        \hskip.5\tabcolsep 
        \xleaders#3{#2.5\@tempdimb #1{1}#2.5\@tempdimb}%
                #2\z@ plus1fil minus1fil\relax
        \hskip.5\tabcolsep}
\newcommand{\cdashlinelr}[1]{%
  \noalign{\vskip\aboverulesep
           \global\let\@dashdrawstore\adl@draw
           \global\let\adl@draw\adl@drawiv}
  \cdashline{#1}
  \noalign{\global\let\adl@draw\@dashdrawstore
           \vskip\belowrulesep}}
\makeatother

\begin{document}

\maketitle

\begin{abstract}
     The rapid evolution of Large Language Models (LLMs) has established cross-lingual versatility as a defining feature of modern systems. However, fine-tuning these models frequently induces negative interference across languages. To address this, we reformulate multilingual fine-tuning as a multi-objective optimization (MOO) problem. Specifically, we introduce \textbf{Bucket-Level MOO}, a scalable distributed framework that applies gradient-based MOO algorithms locally on parameter buckets. This enables conflict-aware updates without the prohibitive communication overhead of reconstructing full gradient vectors. Theoretically, we prove this localized resolution natively enforces \textit{Refined Pareto Stationarity}, a strictly tighter necessary condition for Pareto optimality. Empirically, Bucket-Level MOO mitigates interference by driving LLMs to construct distinct language-specific dimensions, improving representational separability. Extensive experiments across four base LLMs demonstrate that our method significantly improves both seen and unseen multilingual performance over standard fine-tuning paradigms.\footnote{Our code is publicly available at \url{https://github.com/iNLP-Lab/BK-MOO}.}
\end{abstract}

\section{Introduction}

The multilingual capabilities of Large Language Models (LLMs) have advanced rapidly, driven largely by data-centric strategies such as translating English corpora or distilling knowledge from high-resource languages \citep{muennighoff-etal-2023-crosslingual, pan-etal-2024-g, chen-etal-2024-monolingual}. While these approaches improve average cross-lingual performance, they implicitly assume that a single set of shared parameters can accommodate diverse and often incompatible linguistic patterns. In practice, this assumption frequently breaks down during fine-tuning: updates that benefit one language can adversely affect others, leading to \textit{negative interference}, a phenomenon where optimizing for one language degrades performance on another \citep{conneau-etal-2020-unsupervised, wang-etal-2020-negative}. As multilingual training scales to more languages and tasks, mitigating such interference becomes a central challenge for achieving robust and balanced cross-lingual performance.

From an optimization perspective, negative interference is fundamentally driven by conflicting gradients across language-specific objectives \citep{NeurIPS2018_Sener_Koltun, yu2020gradient, shi2023recon}. As illustrated in Figure \ref{fig:conflict} (Left), models with limited initial multilingual alignment (e.g., \texttt{Meta-Llama-3-8B}) exhibit severe, sustained gradient volatility throughout fine-tuning. Their pairwise cosine similarities frequently plunge into negative regimes with massive variance, severely constraining downstream progress. Conversely, stronger base multilingual models (e.g., \texttt{Qwen3-8B-Base}) are significantly more stable and maintain a largely positive average alignment. However, they still experience occasional, sharp gradient conflicts. These observations suggest the inherent instability of optimizing for diverse languages simultaneously.

A natural approach to mitigating such conflicts is to cast multilingual fine-tuning as a Multi-Objective Optimization (MOO) problem \citep{wang2021gradient, mao2022lessforgetting}. In this framework, each language defines a separate objective, and the ultimate goal is to achieve \textit{Pareto optimality}—a state where no language's performance can be improved without degrading another's. To move toward this optimal state, conventional MOO algorithms seek a common descent direction that guarantees \textit{Pareto stationarity}, the widely accepted first-order necessary condition for optimality. 

However, directly applying these standard methods (e.g., MGDA \citep{NeurIPS2018_Sener_Koltun} and CAGrad \citep{liu2021conflict}, which ensure Pareto stationarity, or heuristics like PCGrad \citep{yu2020gradient}) to LLMs is challenging. These methods require access to full gradient vectors across all objectives, incurring substantial communication and memory overhead in distributed training settings. Beyond scalability, they also treat the model as a monolithic parameter space, implicitly assuming conflicts are globally uniform. In practice, linguistic processing within LLMs is structurally heterogeneous, e.g., various layers perform distinct roles \citep{zhao2024how, bandarkar2026multilingual}. As shown in Figure \ref{fig:conflict} (Right), gradient conflicts are highly localized, often concentrating in specific layers (e.g., early or late layers). This mismatch implies that global gradient aggregation can obscure severe local conflicts, motivating the need for more fine-grained, structure-aware optimization strategies.

To address these challenges, we introduce \textbf{Bucket-Level MOO}, a scalable framework that applies gradient-based MOO at the level of parameter partitions. Concretely, instead of forming and manipulating full-model gradients, we intercept the backward pass in distributed training (e.g., ZeRO \citep{rajbhandari2020zeromemoryoptimizationstraining} or FSDP \citep{zhao2023pytorchfsdpexperiencesscaling}) and apply MOO algorithms independently within each parameter bucket before gradient reduction between GPUs. This design enables conflict-aware updates using only locally available gradients, avoiding the need for expensive global aggregation. By resolving conflicts at the same granularity at which gradients are partitioned and communicated, our method aligns naturally with the structural and systems properties of modern LLM training. Theoretically, we show that this localized optimization is not merely a heuristic approximation: it enforces \textit{Refined Pareto Stationarity} \citep{hu2025leveraging}, a stricter necessary condition than standard Pareto stationarity, thereby providing stronger guarantees on the quality of the obtained solutions.

Our main contributions are summarized as follows:
\begin{itemize}
    \item \textit{Practically,} we reveal the layer-wise heterogeneity of negative interference in modern LLMs, motivating the critical need for localized, rather than global, conflict resolution.
    \item \textit{Methodologically,} we propose \textbf{Bucket-Level MOO}, an optimization framework that seamlessly integrates into distributed training paradigms, enabling structure-aware conflict resolution while maintaining memory and communication efficiency during large-scale fine-tuning.
    \item \textit{Empirically}, we demonstrate through extensive experiments that Bucket-Level MOO significantly improves both in-distribution and out-of-distribution multilingual capabilities, successfully carving out highly separable, language-specific representation dimensions.
\end{itemize}

\begin{figure}[t]
    \centering
    \begin{minipage}{0.57\textwidth}
        \includegraphics[width=\linewidth]{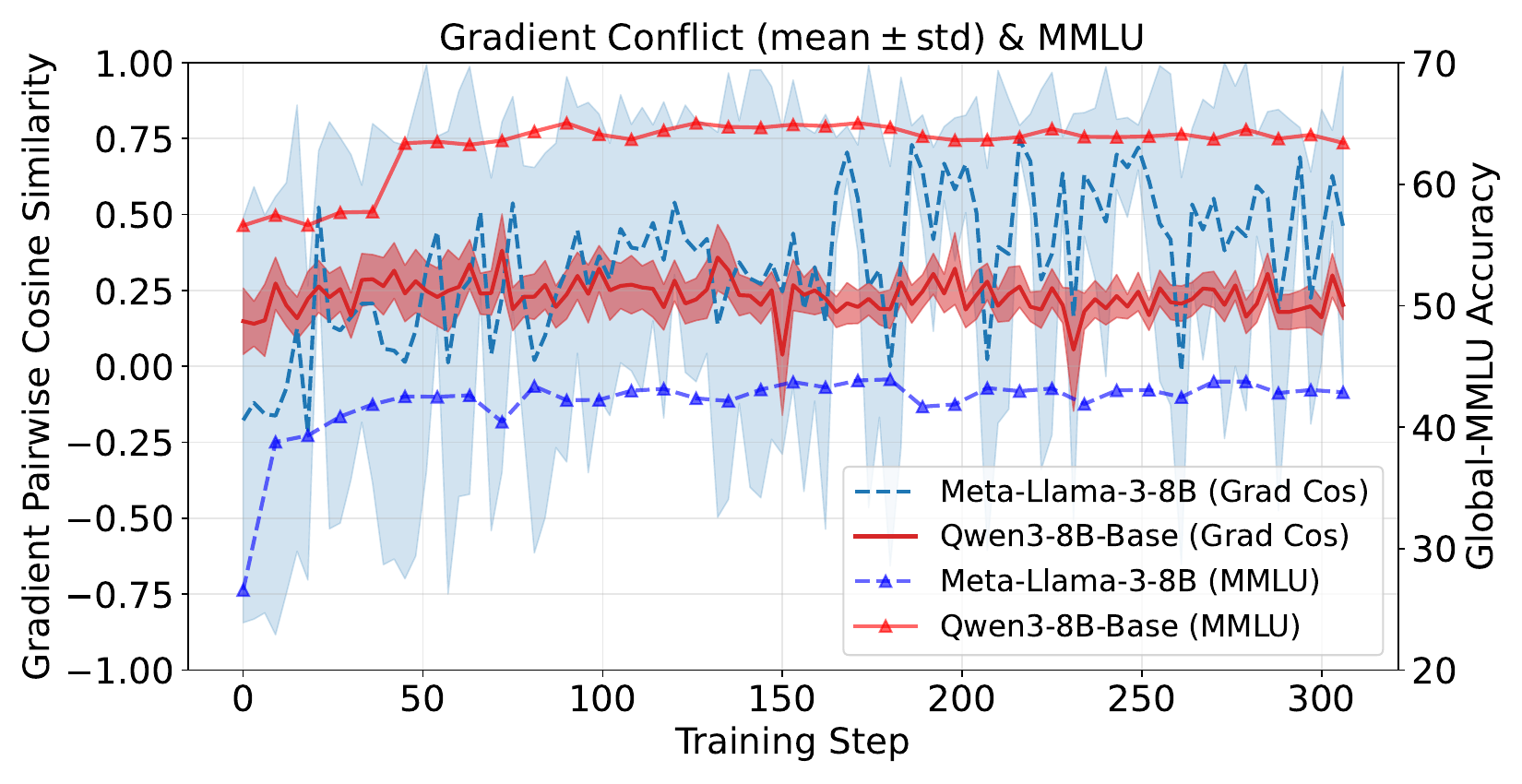}
    \end{minipage}\hfill
    \begin{minipage}{0.38\textwidth}
        \includegraphics[width=\linewidth]{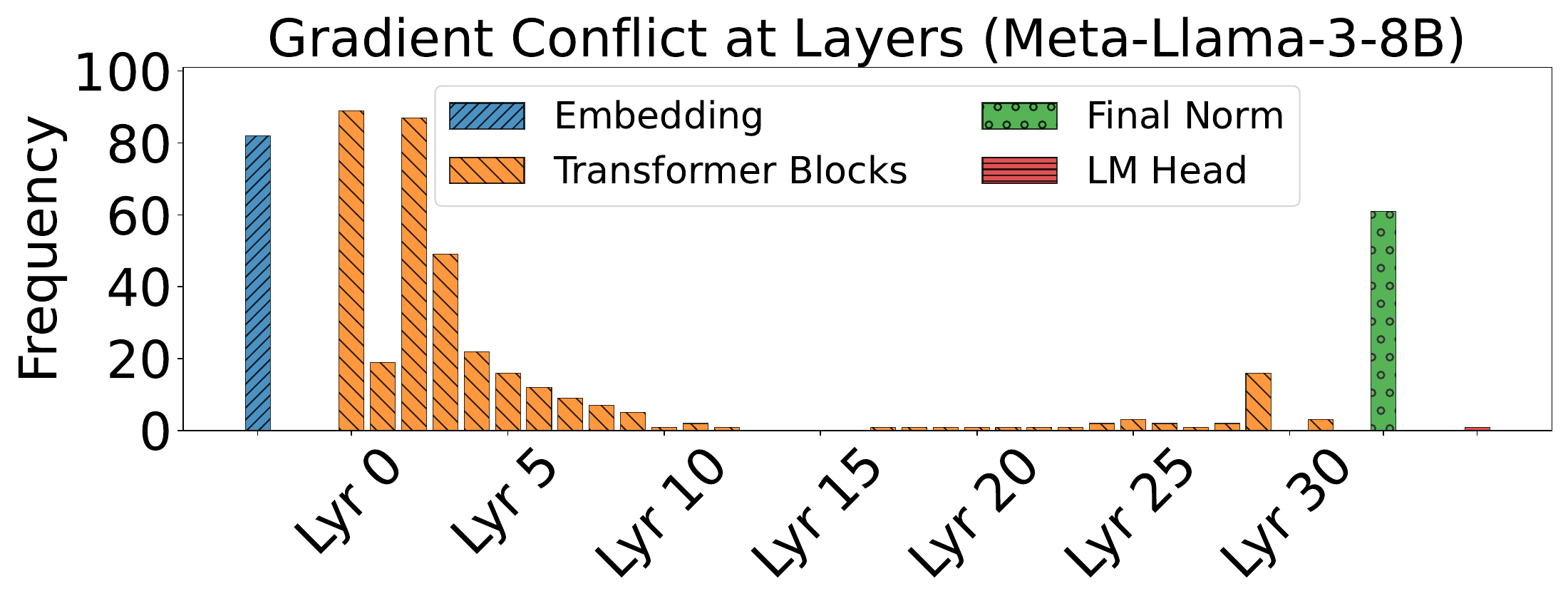}\\[0.2cm] 
        \includegraphics[width=\linewidth]{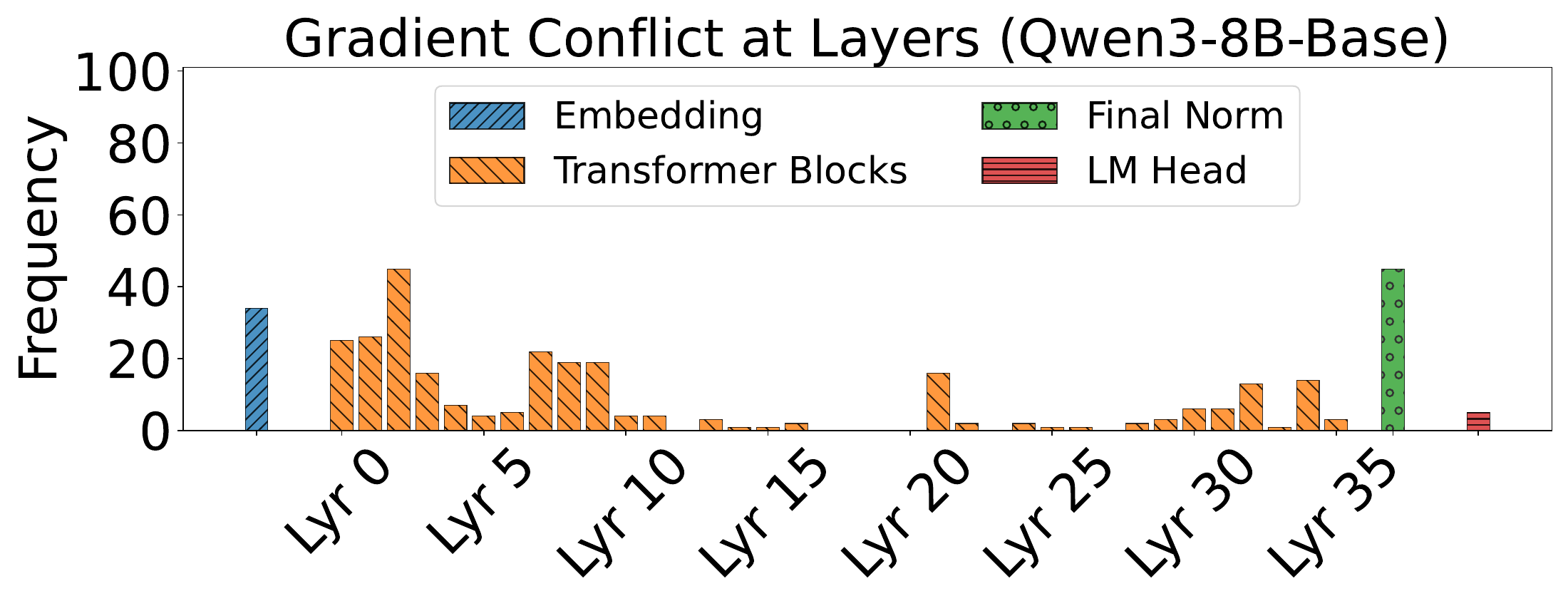}
    \end{minipage}
    \caption{Gradient conflict during multilingual fine-tuning with two base models on eight languages (\texttt{en, zh, it, ar, ko, id, bn, sw}). \textbf{Left:} Mean $\pm$ standard deviation of pairwise gradient cosine similarity across all language pairs, with accuracy scores on Global-MMLU dataset. \textbf{Right:} Per-layer frequency of conflicts, defined as the fraction of training steps where at least one language pair exhibits a negative gradient cosine similarity.
    }
    \label{fig:conflict}
\end{figure}

\section{Preliminaries}

This section frames multilingual fine-tuning as a multi-objective optimization problem and discusses conventional training approaches. Formal definitions and theorems are introduced in Appendix \ref{app:formal}.

\subsection{Negative Interference in Multilingual Fine-Tuning}
\label{sec:moo_defs}

Given a set of target languages denoted by \(\mathcal{T} := \{1, 2, \dots, T\}\), we formulate the multilingual fine-tuning of a language model parameterized by \(\bm{\theta}\) as a Multi-Objective Optimization (MOO) problem. Specifically, the goal is to simultaneously minimize potentially conflicting objectives across all languages. This is formally expressed as:
\begin{align}
    \min_{\bm{\theta}}\; \mathbf{L}(\bm{\theta}), \quad \text{where} \quad \mathbf{L}(\bm{\theta}) = \left[ \mathcal{L}_1(\bm{\theta}), \dots, \mathcal{L}_T(\bm{\theta})\right]^\top,\label{equ:main}
\end{align}
where \(\mathcal{L}_t(\bm{\theta})\) denotes the loss for language \(t \in \mathcal{T}\) when trained on its corresponding dataset \(\mathcal{D}_{t}\).

However, when optimizing the objective defined in Equation \ref{equ:main}, a prevalent challenge is the emergence of \textit{conflicting gradients}. Specifically, two gradients \(\nabla_{\bm{\theta}}\mathcal{L}_i\) and \(\nabla_{\bm{\theta}}\mathcal{L}_j\) are considered conflicting if their cosine similarity is negative, or equivalently, if their inner product is negative: \(\langle \nabla_{\bm{\theta}}\mathcal{L}_i, \nabla_{\bm{\theta}}\mathcal{L}_j \rangle < 0\). This conflict leads to negative interference. Consider the first-order Taylor expansion approximating the change in loss \(\mathcal{L}_j\) following a gradient update for language \(i\):
\begin{align}
    \Delta \mathcal{L}_j \approx \mathcal{L}_j\left(\bm{\theta} - \eta\nabla_{\bm{\theta}}\mathcal{L}_i\right) - \mathcal{L}_j\left(\bm{\theta}\right) \approx -\eta \langle \nabla_{\bm{\theta}}\mathcal{L}_j, \nabla_{\bm{\theta}}\mathcal{L}_i \rangle,
\end{align}
where \(\eta\) denotes the learning rate. As shown, negative interference (\(\Delta \mathcal{L}_j > 0\)) occurs precisely when the gradients conflict (\(\langle \nabla_{\bm{\theta}}\mathcal{L}_j, \nabla_{\bm{\theta}}\mathcal{L}_i \rangle < 0\)), meaning that an update step for language \(i\) inadvertently degrades the performance on language \(j\). 

Consequently, naive optimization via minimizing the average loss (\(\frac{1}{T}\sum_{t}\mathcal{L}_t\)) risks improving dominant languages at the expense of others, as illustrated in Figure \ref{fig:MGDA} (a).

\begin{figure}[t!]
    \centering
    \includegraphics[width=0.99\linewidth]{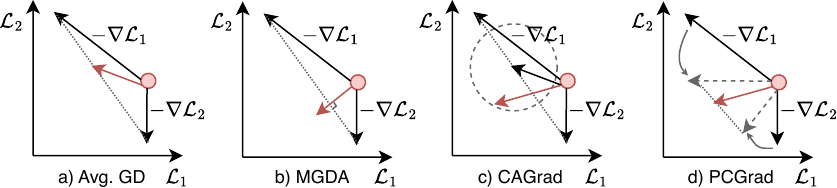}
    \caption{Illustration of how multi-task update rules handle conflicting objectives. \textbf{Averaged Gradients} risks negative interference by strictly following the mean, potentially degrading one loss to improve another. \textbf{MGDA} finds a common descent direction via the minimum-norm point in the gradients' convex hull. \textbf{CAGrad} resolves conflicts by optimizing within a trust region bounded around the average gradient. \textbf{PCGrad} neutralizes conflicts by projecting gradients onto the orthogonal planes of competing tasks.}
    \label{fig:MGDA}
\end{figure}

\subsection{Gradient-Based Conflict Resolution}
\label{sec:gradient_methods}

To mitigate negative interference, various gradient manipulation algorithms identify an optimal descent direction that balances competing objectives better than naive averaging.

\paragraph{Multiple Gradient Descent Algorithm (MGDA).} 
MGDA \citep{fliege2000steepest, desideri2012multiple} seeks the minimum-norm vector within the convex hull of the language-specific gradients by solving a quadratic program for combination weights \(\bm{\alpha}\):
\begin{align}
    \min_{\alpha_1,\dots, \alpha_T} \left\{ \left\|\sum_{t=1}^T\alpha_t \nabla_{\bm{\theta}}\mathcal{L}_t(\bm{\theta}) \right\|_2^2 \;\Bigg|\; \sum_{t=1}^T\alpha_t=1, \alpha_t \geq 0 \;\forall t \right\}.
    \label{eq:mgda}
\end{align}
The resulting update vector ensures a common descent direction that decreases the loss for all objectives simultaneously, preventing negative interference.

\paragraph{Conflict-Averse Gradient Descent (CAGrad).} 
CAGrad \citep{liu2021conflict} bridges the gap between the strict conflict avoidance of MGDA and the fast convergence of average gradient descent. It maximizes the worst-case local improvement among all objectives, constrained within a trust region around the average gradient \(\mathbf{g}_{\text{avg}}\):
\begin{align}
    \mathbf{d}^{\text{CAGrad}} = \argmax_{\mathbf{d}} \min_{t \in \{1,\dots,T\}} \langle \nabla_{\bm{\theta}}\mathcal{L}_t, \mathbf{d} \rangle \quad \text{s.t.} \quad \|\mathbf{d} - \mathbf{g}_{\text{avg}}\| \leq c \|\mathbf{g}_{\text{avg}}\|,
    \label{eq:cagrad}
\end{align}
where \(c \in [0, 1)\) controls the degree of conflict aversion.

\paragraph{Projected Conflicting Gradients (PCGrad).} 
PCGrad \citep{yu2020gradient} uses direct ``gradient surgery''. For conflicting gradients \(g_i = \nabla_{\bm{\theta}}\mathcal{L}_i(\bm{\theta})\) and \(g_j = \nabla_{\bm{\theta}}\mathcal{L}_j(\bm{\theta})\) (i.e., \(\langle g_i, g_j \rangle < 0\)), PCGrad replaces \(g_i\) with its projection \(g_i^{PC}\) onto the orthogonal complement of \(g_j\):
\begin{align}
    \mathbf{d}^{\text{PCGrad}} = \sum_{t=1}^T g_t^{PC}, \quad g_i^{PC} = g_i - \frac{\langle g_i, g_j \rangle}{\|g_j\|^2} g_j.
    \label{eq:pcgrad_proj}
\end{align}
This neutralizes the conflict by ensuring \(\langle g_i^{PC}, g_j \rangle = 0\).

\section{Bucket-Level Multi-Objective Optimization}
\label{sec:BK-MOO}
\subsection{Motivation}
\label{sec:motivation}

In multilingual fine-tuning, we seek \textbf{Pareto optimality} (PO), where no language improves without degrading another. Since exact optimality is computationally intractable, conventional algorithms target \textbf{Pareto stationarity} (PS), a necessary condition where the zero vector lies within the global convex hull of task gradients:
\begin{equation}
\mathbf{0} \in \text{conv}\left\{ \nabla_{\bm{\theta}} \mathcal{L}_1(\bm{\theta}^*), \dots, \nabla_{\bm{\theta}}\mathcal{L}_T(\bm{\theta}^*)\right\}.
\end{equation}
This implies a global weight \(\bm{\alpha} \in \Delta\) exists such that \(\sum_{t=1}^T \alpha_t\nabla_{\bm{\theta}}\mathcal{L}_t(\bm{\theta}^*) = \mathbf{0}\), where $\Delta$ is a $T$-dimensional simplex.
Moreover, \citet{hu2025leveraging} show that exploiting network structure yields a tighter criterion. By partitioning \(\bm{\theta}\) into \(K\) disjoint blocks \(\mathcal{P} = \{ \bm{\theta}_{(1)},\dots, \bm{\theta}_{(K)}\}\), a solution \(\bm{\theta}^*\) achieves \textbf{Refined Pareto stationarity} (RPS) if:
\begin{equation}
\mathbf{0} \in \prod_{k=1}^{K} \text{conv}\left\{ \nabla_{\bm{\theta}_{(k)}}\mathbf{L}(\bm{\theta}^*)\right\}.
\end{equation}
Equivalently, each block \(k\) has a local weight \(\bm{\alpha}^k \in \Delta\) satisfying \(\sum_{t=1}^T \alpha^k_t\nabla_{\bm{\theta}_{(k)}}\mathcal{L}_t(\bm{\theta}^*) = \mathbf{0}\). 

\begin{wrapfigure}{ht}{0.275\textwidth}
  \vspace{-0.6cm}
  \begin{center}
    \includegraphics[width=0.25\textwidth]{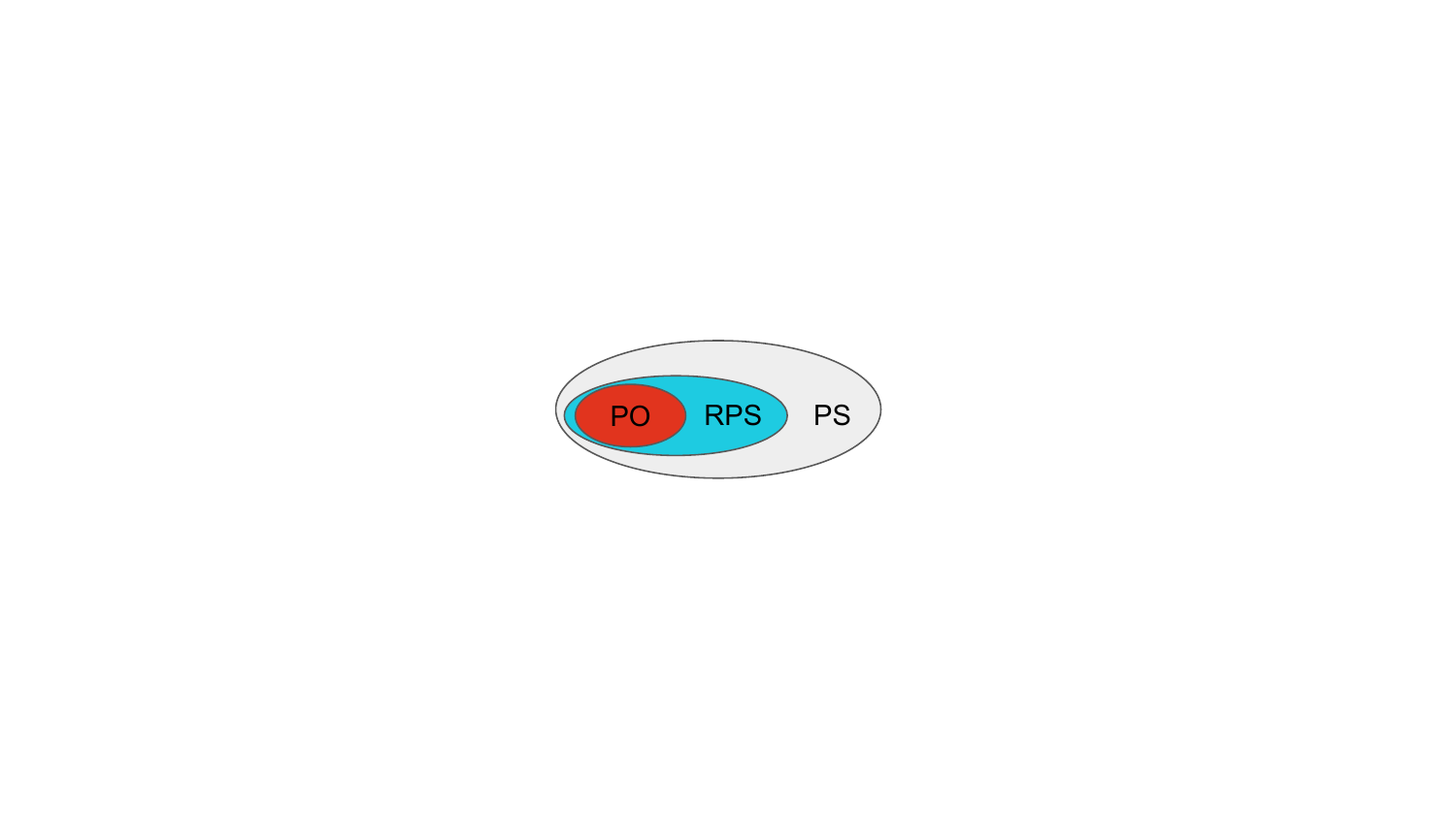}
  \end{center}
 \vspace{-0.4cm}
  \caption{RPS is a stronger candidate for PO than PS.}
 \label{fig:euler_diagram}
 \vspace{-0.6cm}
\end{wrapfigure}
As established by \citet{hu2025leveraging}, refined Pareto stationarity is a strictly tighter necessary condition for Pareto optimality (Figure \ref{fig:euler_diagram}). Indeed, relying strictly on global Pareto stationarity can obscure valid descent directions within specific layers. As shown in Figure \ref{fig:conflict}, this occurs because aggregating gradients across the entire model frequently dilutes and hides severe localized conflicts, particularly within the early and late layers during multilingual fine-tuning.

Beyond the aforementioned theoretical limitations, applying global multi-objective optimization to LLMs introduces severe scalability bottlenecks. In distributed training frameworks such as DeepSpeed ZeRO \citep{rajbhandari2020zeromemoryoptimizationstraining}, gradients are partitioned across multiple GPUs, meaning that a single device never holds the complete gradient vector \(\nabla_{\bm{\theta}}\mathcal{L}_t\). Consequently, applying standard global MOO becomes computationally prohibitive due to both communication and memory constraints. First, computing global optimization objectives incurs substantial communication overhead, as it requires gathering full gradient vectors via expensive \texttt{All-Gather} operations at every training step, effectively undermining ZeRO's communication efficiency. Second, the approach imposes severe memory limitations, because storing \(T\) distinct copies of the full parameter gradient to compute the optimal update direction easily exceeds the memory capacity of individual accelerators.

\subsection{Bucket-Level Multi-Objective Optimization}
\label{sec:bucket_moo}
\begin{figure}
    \centering
    \includegraphics[width=\linewidth]{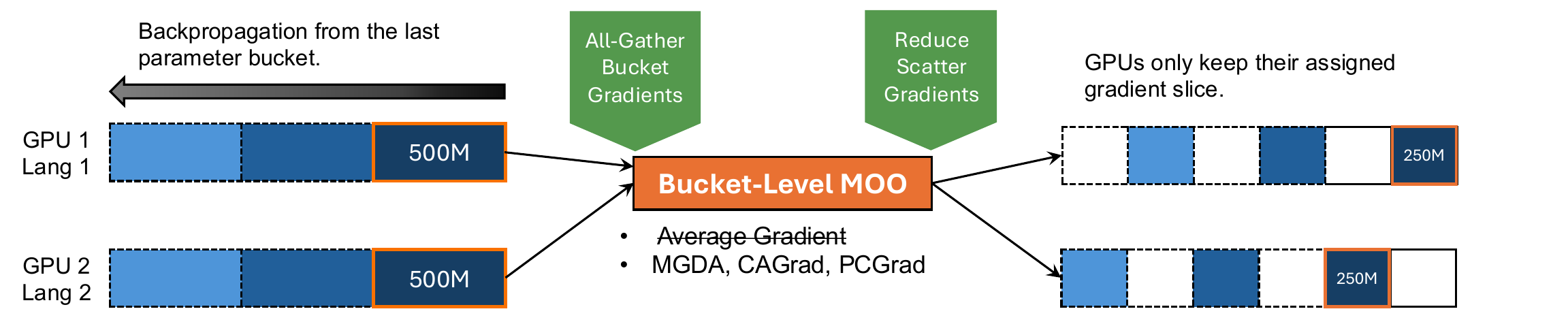}
    \caption{An illustration of the Bucket-Level MOO pipeline for a two-language setup. In distributed training, gradients are sequentially partitioned into fixed-size memory buckets (typically 500M parameters) during the backward pass. Starting from the last parameter bucket, our method intercepts this process by gathering the local gradients, applying gradient-based MOO algorithms (e.g., MGDA, CAGrad, PCGrad) locally to resolve conflicts, and then immediately reducing and scattering the resolved updates across GPUs to preserve memory efficiency.}
    \label{fig:BK-MOO}
\end{figure}


Inspired by the motivations above, we offer \textit{localized gradient conflict resolution}. Rather than treating the model as a monolithic entity where global aggregation obscures distinct, layer-wise gradient interference, we apply gradient-based MOO directly at the bucket level—resolving conflicts strictly \textit{locally} where they occur. Let the global parameters \(\bm{\theta}\) be partitioned into \(K\) disjoint buckets \(\{\bm{\theta}_{(k)}\}_{k=1}^K\). In a distributed setup with \(T\) GPUs, assume each GPU \(t \in \{1, \dots, T\}\) processes a language-specific micro-batch for the corresponding language $t$.

During the backward pass, before the standard gradient reduction occurs, each GPU \(t\) computes the local gradients of bucket \(k\) with respect to its assigned language objective \(\mathcal{L}_t\), denoted as \(\nabla_{\bm{\theta}_{(k)}}\mathcal{L}_t\). We then aggregate these bucket-level gradients across all \(T\) GPUs to form a local gradient set:
\[
\mathbf{G}_{(k)} = \left\{ \nabla_{\bm{\theta}_{(k)}}\mathcal{L}_1, \dots, \nabla_{\bm{\theta}_{(k)}}\mathcal{L}_T \right\},
\]
and apply the multi-objective optimization algorithm locally to \(\mathbf{G}_{(k)}\). By resolving conflicts within each bucket independently, we achieve a refined optimization trajectory while seamlessly integrating with modern distributed training systems.

\paragraph{Bucket-Level MGDA.}
We solve the local quadratic program for each bucket $k$:
\begin{align}
    \min_{\bm{\alpha}^{(k)}} \left\| \sum_{t=1}^T \alpha^{(k)}_t \nabla_{\bm{\theta}_{(k)}}\mathcal{L}_t \right\|_2^2 \quad \text{s.t.} \quad \sum \alpha^{(k)}_t = 1.
    \label{eq:shard-mgda}
\end{align}
This yields local weights $\bm{\alpha}^{(k)}$ and the corresponding update $\mathbf{d}^{\text{MGDA}}_{(k)}$ specific to bucket $k$.

\paragraph{Bucket-Level CAGrad.}
We locally solve Equation \ref{eq:cagrad}. With $\mathbf{g}^{avg}_{(k)}$ as the local average gradient, we maximize the worst local improvement:
\begin{align}
    \mathbf{d}^{\text{CAGrad}}_{(k)} = \argmax_{\mathbf{d}_{(k)}} \min_{t} \langle \nabla_{\bm{\theta}_{(k)}}\mathcal{L}_t, \mathbf{d}_{(k)} \rangle \quad \text{s.t.} \quad \|\mathbf{d}_{(k)} - \mathbf{g}^{avg}_{(k)}\| \leq c \|\mathbf{g}^{avg}_{(k)}\|.
\end{align}
This enforces "Conflict Aversion" per bucket, keeping updates close to the local average while resolving resident conflicts.

\paragraph{Bucket-Level PCGrad.}
Conflict detection and projection rely solely on local inner products:
\begin{align}
    \text{if } \langle \nabla_{\bm{\theta}_{(k)}}\mathcal{L}_i, \nabla_{\bm{\theta}_{(k)}}\mathcal{L}_j \rangle < 0 \implies \text{Project locally on rank } k.
\end{align}

The final global update $\mathbf{d} = [\mathbf{d}_{(1)}, \dots, \mathbf{d}_{(K)}]$ concatenates the resolved local gradients. The illustration and pseudocode for Bucket-Level MOO are shown in Figure \ref{fig:BK-MOO} and Appendix \ref{app:alg}, respectively.

\subsection{Bucket-Level MOO Achieves Refined Pareto Stationarity}
\label{sec:refined_pareto}

Crucially, this distributed implementation is not merely a heuristic designed for computational efficiency; it provides a stricter theoretical condition for the optimization process.  Conventional global MOO algorithms, such as Global MGDA and Global CAGrad, are mathematically guaranteed to converge only to \textit{standard} Pareto stationarity (Theorems \ref{theorem:mgda_global}, \ref{theorem:cagrad_global}). As discussed in Section \ref{sec:motivation}, this global equilibrium may be suboptimal for heterogeneous networks, as it can obscure localized gradient conflicts across different layers.

In contrast, our proposed Bucket-Level variants naturally align with the structural framework of \textit{Refined Pareto Stationarity}. By enforcing multi-objective optimization conditions independently on each bucket \(\bm{\theta}_{(k)}\) defined by the distributed partition \(\mathcal{P} = \{\bm{\theta}_{(1)}, \dots, \bm{\theta}_{(K)}\}\), the algorithm enforces Pareto stationarity strictly at the local level. Because this local equilibrium must be satisfied across all disjoint buckets simultaneously, the model inherently achieves Refined Pareto Stationarity. Consequently, Bucket-Level MOO enforces a strictly tighter necessary condition for Pareto optimality than conventional global MOO. This allows the optimizer to uncover valid common descent directions and exert fine-grained, layer-specific control over language interference.

\begin{theorem}[Bucket-Level MGDA Convergence]
    Let \(\bm{\alpha}^{(k)}\) be the solution to the MGDA problem (Equation \ref{eq:shard-mgda}) solved locally on bucket \(k\), and let \(\mathbf{d}^{\text{MGDA}}_{(k)} = \sum_{t=1}^T\alpha^{(k)}_t \nabla_{\bm{\theta}_{(k)}}\mathcal{L}_t\) be the local update vector. Define the global update as \(\mathbf{d}^{\text{MGDA}} = [\mathbf{d}^{\text{MGDA}}_{(1)}, \dots, \mathbf{d}^{\text{MGDA}}_{(K)}]\). Then, one of the following holds:
    \begin{enumerate}
        \item[(a)] \(\|\mathbf{d}^{\text{MGDA}}\|_2=0\), implying \(\|\mathbf{d}^{\text{MGDA}}_{(k)}\|_2=0\) for all \(k\), and the current parameters are \textbf{Refined Pareto Stationary} with respect to partition \(\mathcal{P}\).
        \item[(b)] \(-\mathbf{d}^{\text{MGDA}}\) is a valid descent direction that decreases all objectives simultaneously (i.e., \(\langle \nabla_{\bm{\theta}} \mathcal{L}_t, \mathbf{d}^{\text{MGDA}} \rangle > 0\) for all \(t\) where \(\mathbf{d}^{\text{MGDA}} \neq 0\)).
    \end{enumerate}
    \label{theorem:shard-mgda}
\end{theorem}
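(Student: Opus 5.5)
The plan is to reduce everything to the classical first-order optimality (KKT) characterization of the local min-norm problem in Equation \ref{eq:shard-mgda}. I read that problem with the simplex constraint $\alpha^{(k)}_t\ge 0$, as in Equation \ref{eq:mgda}. Write $g_t^{(k)} := \nabla_{\bm{\theta}_{(k)}}\mathcal{L}_t$ and $\mathbf{d}_{(k)} := \mathbf{d}^{\text{MGDA}}_{(k)}$. The key fact I will establish first is the standard MGDA variational inequality: since $\mathbf{d}_{(k)}$ is the minimum-norm point of the compact convex set $\text{conv}\{g_1^{(k)},\dots,g_T^{(k)}\}$, the projection characterization gives $\langle \mathbf{d}_{(k)}, \mathbf{v} - \mathbf{d}_{(k)}\rangle \ge 0$ for every $\mathbf{v}$ in the hull. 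Taking $\mathbf{v} = g_t^{(k)}$ yields
\begin{equation*}
\langle g_t^{(k)}, \mathbf{d}_{(k)}\rangle \;\ge\; \|\mathbf{d}_{(k)}\|_2^2 \quad \text{for all } t \text{ and all } k.
\end{equation*}
This is exactly the argument behind the global result (Theorem \ref{theorem:mgda_global}), now applied bucket by bucket.

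Next I split into the two cases of the statement. For case (a), since the buckets are disjoint, $\|\mathbf{d}^{\text{MGDA}}\|_2^2 = \sum_k \|\mathbf{d}_{(k)}\|_2^2$. Hence $\|\mathbf{d}^{\text{MGDA}}\|_2 = 0$ forces $\mathbf{d}_{(k)} = \mathbf{0}$ for every $k$. Each local weight $\bm{\alpha}^{(k)}\in\Delta$ then satisfies $\sum_t \alpha^{(k)}_t g^{(k)}_t = \mathbf{0}$, which is precisely the blockwise characterization of Refined Pareto Stationarity given in Section \ref{sec:motivation}. For case (b), I decompose the global inner product over blocks, $\langle \nabla_{\bm{\theta}}\mathcal{L}_t, \mathbf{d}^{\text{MGDA}}\rangle = \sum_k \langle g_t^{(k)}, \mathbf{d}_{(k)}\rangle$. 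Applying the inequality above termwise gives
\begin{equation*}
\langle \nabla_{\bm{\theta}}\mathcal{L}_t, \mathbf{d}^{\text{MGDA}}\rangle \;\ge\; \sum_k \|\mathbf{d}_{(k)}\|_2^2 \;=\; \|\mathbf{d}^{\text{MGDA}}\|_2^2 \;>\; 0,
\end{equation*}
so $-\mathbf{d}^{\text{MGDA}}$ is a common descent direction. A first-order Taylor expansion, as in Section \ref{sec:moo_defs}, then shows that a small step $\bm{\theta} - \eta\,\mathbf{d}^{\text{MGDA}}$ strictly decreases every $\mathcal{L}_t$.

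The only real subtlety is the variational inequality itself, in particular being careful that Equation \ref{eq:shard-mgda} as printed omits $\alpha_t\ge 0$. Without nonnegativity the minimizer lies over the affine hull, and the bound becomes an equality $\langle g_t^{(k)},\mathbf{d}_{(k)}\rangle = \|\mathbf{d}_{(k)}\|^2$ by orthogonality. In that reading the descent claim still holds, but stationarity would be over the affine hull rather than $\text{conv}$. I will therefore state explicitly that the simplex constraint is intended, and prove the inequality from first-order optimality. Concretely, for $\varepsilon\in(0,1]$, the point $(1-\varepsilon)\mathbf{d}_{(k)} + \varepsilon g_t^{(k)}$ lies in the hull, so its norm is at least $\|\mathbf{d}_{(k)}\|$. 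Expanding and letting $\varepsilon\to 0$ gives the claim.
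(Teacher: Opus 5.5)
Your proposal is correct and follows the paper's proof essentially step for step: the block decomposition of $\langle \nabla_{\bm{\theta}}\mathcal{L}_t, \mathbf{d}^{\text{MGDA}}\rangle$, the local min-norm inequality $\langle \nabla_{\bm{\theta}_{(k)}}\mathcal{L}_t, \mathbf{d}^{\text{MGDA}}_{(k)}\rangle \ge \|\mathbf{d}^{\text{MGDA}}_{(k)}\|_2^2$, summation over buckets, and the same two-case analysis. The only differences are minor. You derive that inequality directly via the $\varepsilon$-perturbation argument, whereas the paper simply cites D\'esid\'eri. You also rightly flag that Equation~\ref{eq:shard-mgda} omits the constraint $\alpha_t \ge 0$; the appendix restatement does impose it through $\bm{\alpha}^{(k)} \in \Delta$.
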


Theorem \ref{theorem:shard-mgda} guarantees that Bucket-Level MGDA preserves rigorous optimization dynamics. Concatenating locally resolved vectors produces a global update that decreases the loss for all languages simultaneously, ultimately converging to the superior candidate of Refined Pareto Stationarity.

\begin{theorem}[Bucket-Level CAGrad Convergence]
    Let \(\mathbf{d}^{\text{CAGrad}}_{(k)}\) be the solution to the CAGrad problem solved locally on bucket \(k\) with conflict parameter \(0 \le c < 1\) and local average gradient \(\mathbf{g}^{avg}_{(k)}\). Define the global update as \(\mathbf{d}^{\text{CAGrad}} = [\mathbf{d}^{\text{CAGrad}}_{(1)}, \dots, \mathbf{d}^{\text{CAGrad}}_{(K)}]\). Then:
    \begin{enumerate}
        \item[(a)] \textbf{Global Descent:} The global update \(-\mathbf{d}^{\text{CAGrad}}\) is a descent direction for the global average loss \(\mathcal{L}_{avg}\), satisfying: \(\langle \nabla_{\bm{\theta}} \mathcal{L}_{avg}, \mathbf{d}^{\text{CAGrad}} \rangle \geq \frac{1-c^2}{2} \|\nabla_{\bm{\theta}} \mathcal{L}_{avg}\|_2^2.\)
        \item[(b)] \textbf{Stationarity:} If \(\|\mathbf{d}^{\text{CAGrad}}\|_2 = 0\), the current parameters satisfy \textbf{Refined Pareto Stationarity} with uniform weights (i.e., \(\frac{1}{T}\sum \nabla_{\bm{\theta}_{(k)}}\mathcal{L}_t = \mathbf{0}\) for all \(k\)).
    \end{enumerate}
    \label{theorem:shard-cagrad}
\end{theorem}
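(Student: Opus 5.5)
The plan is to work bucket by bucket using only the \emph{feasibility} of each local CAGrad solution, and then sum the per-bucket inequalities to obtain the global statement. The optimality of \(\mathbf{d}^{\text{CAGrad}}_{(k)}\) is never needed. We only use that it lies in the trust region \(\|\mathbf{d}_{(k)} - \mathbf{g}^{avg}_{(k)}\| \le c\|\mathbf{g}^{avg}_{(k)}\|\). Such a maximizer exists because this ball is compact and the objective is continuous. The key structural fact is that, since \(\mathcal{L}_{avg} = \frac{1}{T}\sum_t \mathcal{L}_t\) and the buckets partition \(\bm{\theta}\), the restriction of \(\nabla_{\bm{\theta}}\mathcal{L}_{avg}\) to bucket \(k\) is exactly \(\mathbf{g}^{avg}_{(k)}\). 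Consequently
\begin{equation*}
\langle \nabla_{\bm{\theta}}\mathcal{L}_{avg}, \mathbf{d}^{\text{CAGrad}}\rangle = \sum_{k=1}^K \langle \mathbf{g}^{avg}_{(k)}, \mathbf{d}^{\text{CAGrad}}_{(k)}\rangle
\quad\text{and}\quad
\|\nabla_{\bm{\theta}}\mathcal{L}_{avg}\|_2^2 = \sum_{k=1}^K \|\mathbf{g}^{avg}_{(k)}\|_2^2 .
\end{equation*}

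For part (a), fix a bucket \(k\) and write \(\mathbf{g} = \mathbf{g}^{avg}_{(k)}\) and \(\mathbf{d} = \mathbf{d}^{\text{CAGrad}}_{(k)}\). The polarization identity gives
\begin{equation*}
\langle \mathbf{g}, \mathbf{d}\rangle = \tfrac{1}{2}\left(\|\mathbf{g}\|^2 + \|\mathbf{d}\|^2 - \|\mathbf{d}-\mathbf{g}\|^2\right).
\end{equation*}
Bounding \(\|\mathbf{d}-\mathbf{g}\|^2 \le c^2\|\mathbf{g}\|^2\) by the trust-region constraint and dropping the nonnegative term \(\|\mathbf{d}\|^2\) yields \(\langle \mathbf{g}, \mathbf{d}\rangle \ge \frac{1-c^2}{2}\|\mathbf{g}\|^2\) for each bucket. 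Summing over \(k\) and using the two decompositions above gives the claimed global inequality. Because \(c<1\), the right-hand side is strictly positive whenever \(\nabla_{\bm{\theta}}\mathcal{L}_{avg}\neq \mathbf{0}\). Hence \(-\mathbf{d}^{\text{CAGrad}}\) is a descent direction for \(\mathcal{L}_{avg}\), matching the paper's convention that updates are \(\bm{\theta} - \eta\,\mathbf{d}\).

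For part (b), suppose \(\|\mathbf{d}^{\text{CAGrad}}\|_2 = 0\). Since the buckets are disjoint, every \(\mathbf{d}^{\text{CAGrad}}_{(k)} = \mathbf{0}\). Feasibility then reads \(\|\mathbf{g}^{avg}_{(k)}\| \le c\|\mathbf{g}^{avg}_{(k)}\|\), and \(c<1\) forces \(\mathbf{g}^{avg}_{(k)} = \mathbf{0}\) for every \(k\). Alternatively, the per-bucket inequality from part (a) with \(\mathbf{d}=\mathbf{0}\) gives the same conclusion. Taking the uniform weight \(\bm{\alpha}^k = (1/T,\dots,1/T)\in\Delta\) for each block shows \(\mathbf{0}\) lies in \(\prod_k \text{conv}\{\nabla_{\bm{\theta}_{(k)}}\mathbf{L}\}\), which is Refined Pareto Stationarity with respect to \(\mathcal{P}\).

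I do not expect a serious obstacle. The only point needing care is to avoid the dual/closed-form characterization of the CAGrad solution, which would require the min-max analysis of \citet{liu2021conflict}. The polarization argument sidesteps it entirely and makes the bucket-wise decomposition immediate.
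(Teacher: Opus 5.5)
Your proposal is correct and follows the paper's proof essentially step for step: the paper also splits $\langle \nabla_{\bm{\theta}}\mathcal{L}_{avg}, \mathbf{d}^{\text{CAGrad}}\rangle$ into bucket-wise terms, expands the trust-region constraint (your polarization identity is the same algebra), drops $\|\mathbf{d}^{\text{CAGrad}}_{(k)}\|^2$ and sums over buckets, and for part (b) substitutes $\mathbf{d}^{\text{CAGrad}}_{(k)}=\mathbf{0}$ into the constraint to get $(1-c)\|\mathbf{g}^{avg}_{(k)}\|\le 0$. As in the paper, you use only the feasibility of each local solution, so no min-max or dual analysis of CAGrad is needed.
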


Similarly, Theorem \ref{theorem:shard-cagrad} extends this theoretical safety to Bucket-Level CAGrad. While CAGrad relaxes MGDA for faster convergence, the bucket-wise application still guarantees descent for the average loss, bounded by \(c\). Crucially, optimization halts strictly when the average gradient within every single bucket vanishes, safely resolving local cross-lingual interference while driving the model toward an average-loss minimum that satisfies the refined stationarity criterion.

\section{Experiments}
\label{sec:exp}

\subsection{Experimental Setup}

\paragraph{Training Data Preparation} We construct an English corpus comprising 1,000 LIMA conversational pairs \citep{zhou2023lima} and 630 s1k-1.1 reasoning samples \citep{muennighoff-etal-2025-s1} distilled from DeepSeek-R1.\footnote{We filter s1k-1.1 to include only samples marked as correctly distilled from DeepSeek-R1.} \texttt{Gemini 2.0 Flash} is employed to translate this corpus into eight diverse languages (\texttt{en, zh, it, ar, ko, id, bn, sw}) covering the high- to low-resource spectrum.

\paragraph{Evaluation} We assess model performance using a comprehensive suite of benchmarks covering three core competencies:
\begin{itemize}
    \item \textbf{Natural Language Understanding:} We use \textbf{BELEBELE} \citep{bandarkar-etal-2024-belebele} to evaluate parallel machine reading comprehension capabilities.
    \item \textbf{Technical Reasoning:} We employ Multilingual \textbf{ARC-E} \citep{han2025mubenchassessmentmultilingualcapabilities} and \textbf{PolyMath} (low and medium subsets) \citep{wang2025polymath} to assess reasoning skills.
    \item \textbf{Knowledge-Based QA:} We utilize Global-\textbf{MMLU}-Lite \cite{singh2024globalmmluunderstandingaddressing} to measure world knowledge retention and transfer.
\end{itemize}
To ensure a standardized comparison across all tasks, we evaluate the models on the 13 languages that intersect across these four benchmarks. These are categorized into the 8 \textit{seen} languages from our training phase and 5 \textit{unseen} languages (\texttt{de, es, fr, ja, pt}) to evaluate cross-lingual generalization. Further implementation and computational details are in Appendices \ref{appendix:exp} and \ref{app:time}, respectively.

\subsection{Main Results}
\begin{table*}[t!]
\caption{Main results (Subscripts denote difference from Vanilla SFT).}
\label{tab:main_compact_aligned}
\centering
\resizebox{\textwidth}{!}{
\renewcommand{\arraystretch}{1.2}
\begin{tabular}{l*{10}{>{\centering\arraybackslash}p{1.3cm}}}
\toprule
\multirow{2}{*}{\textbf{Method}}
& \multicolumn{5}{c}{\textit{8 Seen Languages}}
& \multicolumn{5}{c}{\textit{5 Unseen Languages}} \\
\cmidrule(r){2-6} \cmidrule(r){7-11}
& \textbf{BELE} & \textbf{ARC-E} & \textbf{PolyMath} & \textbf{MMLU} & \textbf{Avg.} & \textbf{BELE} & \textbf{ARC-E} & \textbf{PolyMath} & \textbf{MMLU} & \textbf{Avg.} \\
\midrule[\heavyrulewidth]
\multicolumn{11}{c}{\textbf{Meta-Llama-3-8B}} \\
Base & $51.8_{\phantom{\scriptscriptstyle +0.0}}$ & $54.2_{\phantom{\scriptscriptstyle +0.0}}$ & $12.0_{\phantom{\scriptscriptstyle +0.0}}$ & $37.2_{\phantom{\scriptscriptstyle +0.0}}$ & $38.8_{\phantom{\scriptscriptstyle +0.0}}$ & $55.9_{\phantom{\scriptscriptstyle +0.0}}$ & $65.2_{\phantom{\scriptscriptstyle +0.0}}$ & $13.5_{\phantom{\scriptscriptstyle +0.0}}$ & $42.6_{\phantom{\scriptscriptstyle +0.0}}$ & $44.3_{\phantom{\scriptscriptstyle +0.0}}$ \\
\cdashlinelr{1-11}
V. SFT & $53.5_{\phantom{\scriptscriptstyle +0.0}}$ & $61.5_{\phantom{\scriptscriptstyle +0.0}}$ & $18.3_{\phantom{\scriptscriptstyle +0.0}}$ & $42.5_{\phantom{\scriptscriptstyle +0.0}}$ & $44.0_{\phantom{\scriptscriptstyle +0.0}}$ & $60.7_{\phantom{\scriptscriptstyle +0.0}}$ & $71.5_{\phantom{\scriptscriptstyle +0.0}}$ & $21.5_{\phantom{\scriptscriptstyle +0.0}}$ & $48.9_{\phantom{\scriptscriptstyle +0.0}}$ & $50.7_{\phantom{\scriptscriptstyle +0.0}}$ \\
\cdashlinelr{1-11}
MGDA & $56.4_{\scriptscriptstyle \textcolor[HTML]{008000}{+2.9}}$ & $63.7_{\scriptscriptstyle \textcolor[HTML]{008000}{+2.2}}$ & $21.6_{\scriptscriptstyle \textcolor[HTML]{008000}{+3.3}}$ & $45.5_{\scriptscriptstyle \textcolor[HTML]{008000}{+3.0}}$ & $46.8_{\scriptscriptstyle \textcolor[HTML]{008000}{+2.8}}$ & $63.4_{\scriptscriptstyle \textcolor[HTML]{008000}{+2.7}}$ & $72.7_{\scriptscriptstyle \textcolor[HTML]{008000}{+1.2}}$ & $24.3_{\scriptscriptstyle \textcolor[HTML]{008000}{+2.8}}$ & $50.6_{\scriptscriptstyle \textcolor[HTML]{008000}{+1.7}}$ & $52.8_{\scriptscriptstyle \textcolor[HTML]{008000}{+2.1}}$ \\
CAGrad & $56.6_{\scriptscriptstyle \textcolor[HTML]{008000}{+3.1}}$ & $63.9_{\scriptscriptstyle \textcolor[HTML]{008000}{+2.4}}$ & $20.2_{\scriptscriptstyle \textcolor[HTML]{008000}{+1.9}}$ & $43.9_{\scriptscriptstyle \textcolor[HTML]{008000}{+1.4}}$ & $46.2_{\scriptscriptstyle \textcolor[HTML]{008000}{+2.2}}$ & $62.4_{\scriptscriptstyle \textcolor[HTML]{008000}{+1.7}}$ & $72.8_{\scriptscriptstyle \textcolor[HTML]{008000}{+1.3}}$ & $23.8_{\scriptscriptstyle \textcolor[HTML]{008000}{+2.3}}$ & $50.5_{\scriptscriptstyle \textcolor[HTML]{008000}{+1.6}}$ & $52.4_{\scriptscriptstyle \textcolor[HTML]{008000}{+1.7}}$ \\
PCGrad & $56.8_{\scriptscriptstyle \textcolor[HTML]{008000}{+3.3}}$ & $64.3_{\scriptscriptstyle \textcolor[HTML]{008000}{+2.8}}$ & $21.4_{\scriptscriptstyle \textcolor[HTML]{008000}{+3.1}}$ & $45.2_{\scriptscriptstyle \textcolor[HTML]{008000}{+2.7}}$ & $46.9_{\scriptscriptstyle \textcolor[HTML]{008000}{+2.9}}$ & $63.7_{\scriptscriptstyle \textcolor[HTML]{008000}{+3.0}}$ & $73.6_{\scriptscriptstyle \textcolor[HTML]{008000}{+2.1}}$ & $24.5_{\scriptscriptstyle \textcolor[HTML]{008000}{+3.0}}$ & $49.0_{\scriptscriptstyle \textcolor[HTML]{008000}{+0.1}}$ & $52.7_{\scriptscriptstyle \textcolor[HTML]{008000}{+2.0}}$ \\
\midrule[\heavyrulewidth]
\multicolumn{11}{c}{\textbf{Llama-3.1-8B}} \\
Base & $59.1_{\phantom{\scriptscriptstyle +0.0}}$ & $60.0_{\phantom{\scriptscriptstyle +0.0}}$ & $13.2_{\phantom{\scriptscriptstyle +0.0}}$ & $41.8_{\phantom{\scriptscriptstyle +0.0}}$ & $43.5_{\phantom{\scriptscriptstyle +0.0}}$ & $67.8_{\phantom{\scriptscriptstyle +0.0}}$ & $70.0_{\phantom{\scriptscriptstyle +0.0}}$ & $13.8_{\phantom{\scriptscriptstyle +0.0}}$ & $48.9_{\phantom{\scriptscriptstyle +0.0}}$ & $50.1_{\phantom{\scriptscriptstyle +0.0}}$ \\
\cdashlinelr{1-11}
V. SFT & $53.1_{\phantom{\scriptscriptstyle +0.0}}$ & $61.3_{\phantom{\scriptscriptstyle +0.0}}$ & $20.3_{\phantom{\scriptscriptstyle +0.0}}$ & $44.2_{\phantom{\scriptscriptstyle +0.0}}$ & $44.7_{\phantom{\scriptscriptstyle +0.0}}$ & $61.3_{\phantom{\scriptscriptstyle +0.0}}$ & $72.6_{\phantom{\scriptscriptstyle +0.0}}$ & $22.8_{\phantom{\scriptscriptstyle +0.0}}$ & $50.5_{\phantom{\scriptscriptstyle +0.0}}$ & $51.8_{\phantom{\scriptscriptstyle +0.0}}$ \\
\cdashlinelr{1-11}
MGDA & $57.1_{\scriptscriptstyle \textcolor[HTML]{008000}{+4.0}}$ & $63.9_{\scriptscriptstyle \textcolor[HTML]{008000}{+2.6}}$ & $22.4_{\scriptscriptstyle \textcolor[HTML]{008000}{+2.1}}$ & $45.2_{\scriptscriptstyle \textcolor[HTML]{008000}{+1.0}}$ & $47.2_{\scriptscriptstyle \textcolor[HTML]{008000}{+2.5}}$ & $65.0_{\scriptscriptstyle \textcolor[HTML]{008000}{+3.7}}$ & $74.1_{\scriptscriptstyle \textcolor[HTML]{008000}{+1.5}}$ & $25.8_{\scriptscriptstyle \textcolor[HTML]{008000}{+3.0}}$ & $51.7_{\scriptscriptstyle \textcolor[HTML]{008000}{+1.2}}$ & $54.2_{\scriptscriptstyle \textcolor[HTML]{008000}{+2.4}}$ \\
CAGrad & $57.2_{\scriptscriptstyle \textcolor[HTML]{008000}{+4.1}}$ & $63.7_{\scriptscriptstyle \textcolor[HTML]{008000}{+2.4}}$ & $22.9_{\scriptscriptstyle \textcolor[HTML]{008000}{+2.6}}$ & $45.6_{\scriptscriptstyle \textcolor[HTML]{008000}{+1.4}}$ & $47.3_{\scriptscriptstyle \textcolor[HTML]{008000}{+2.6}}$ & $63.9_{\scriptscriptstyle \textcolor[HTML]{008000}{+2.6}}$ & $74.4_{\scriptscriptstyle \textcolor[HTML]{008000}{+1.8}}$ & $26.6_{\scriptscriptstyle \textcolor[HTML]{008000}{+3.8}}$ & $52.1_{\scriptscriptstyle \textcolor[HTML]{008000}{+1.6}}$ & $54.3_{\scriptscriptstyle \textcolor[HTML]{008000}{+2.5}}$ \\
PCGrad & $57.0_{\scriptscriptstyle \textcolor[HTML]{008000}{+3.9}}$ & $63.5_{\scriptscriptstyle \textcolor[HTML]{008000}{+2.2}}$ & $22.4_{\scriptscriptstyle \textcolor[HTML]{008000}{+2.1}}$ & $44.8_{\scriptscriptstyle \textcolor[HTML]{008000}{+0.6}}$ & $46.9_{\scriptscriptstyle \textcolor[HTML]{008000}{+2.2}}$ & $63.2_{\scriptscriptstyle \textcolor[HTML]{008000}{+1.9}}$ & $74.4_{\scriptscriptstyle \textcolor[HTML]{008000}{+1.8}}$ & $26.1_{\scriptscriptstyle \textcolor[HTML]{008000}{+3.3}}$ & $52.6_{\scriptscriptstyle \textcolor[HTML]{008000}{+2.1}}$ & $54.1_{\scriptscriptstyle \textcolor[HTML]{008000}{+2.3}}$ \\
\midrule[\heavyrulewidth]
\multicolumn{11}{c}{\textbf{Qwen3-4B-Base}} \\
Base & $75.5_{\phantom{\scriptscriptstyle +0.0}}$ & $81.7_{\phantom{\scriptscriptstyle +0.0}}$ & $39.4_{\phantom{\scriptscriptstyle +0.0}}$ & $56.9_{\phantom{\scriptscriptstyle +0.0}}$ & $63.4_{\phantom{\scriptscriptstyle +0.0}}$ & $83.8_{\phantom{\scriptscriptstyle +0.0}}$ & $93.3_{\phantom{\scriptscriptstyle +0.0}}$ & $44.0_{\phantom{\scriptscriptstyle +0.0}}$ & $63.6_{\phantom{\scriptscriptstyle +0.0}}$ & $71.2_{\phantom{\scriptscriptstyle +0.0}}$ \\
\cdashlinelr{1-11}
V. SFT & $69.8_{\phantom{\scriptscriptstyle +0.0}}$ & $78.9_{\phantom{\scriptscriptstyle +0.0}}$ & $43.4_{\phantom{\scriptscriptstyle +0.0}}$ & $55.9_{\phantom{\scriptscriptstyle +0.0}}$ & $62.0_{\phantom{\scriptscriptstyle +0.0}}$ & $76.0_{\phantom{\scriptscriptstyle +0.0}}$ & $88.8_{\phantom{\scriptscriptstyle +0.0}}$ & $45.8_{\phantom{\scriptscriptstyle +0.0}}$ & $62.3_{\phantom{\scriptscriptstyle +0.0}}$ & $68.2_{\phantom{\scriptscriptstyle +0.0}}$ \\
\cdashlinelr{1-11}
MGDA & $72.8_{\scriptscriptstyle \textcolor[HTML]{008000}{+3.0}}$ & $80.2_{\scriptscriptstyle \textcolor[HTML]{008000}{+1.3}}$ & $44.7_{\scriptscriptstyle \textcolor[HTML]{008000}{+1.3}}$ & $58.1_{\scriptscriptstyle \textcolor[HTML]{008000}{+2.2}}$ & $63.9_{\scriptscriptstyle \textcolor[HTML]{008000}{+1.9}}$ & $79.6_{\scriptscriptstyle \textcolor[HTML]{008000}{+3.6}}$ & $90.8_{\scriptscriptstyle \textcolor[HTML]{008000}{+2.0}}$ & $47.4_{\scriptscriptstyle \textcolor[HTML]{008000}{+1.6}}$ & $66.0_{\scriptscriptstyle \textcolor[HTML]{008000}{+3.7}}$ & $70.9_{\scriptscriptstyle \textcolor[HTML]{008000}{+2.7}}$ \\
CAGrad & $72.6_{\scriptscriptstyle \textcolor[HTML]{008000}{+2.8}}$ & $80.7_{\scriptscriptstyle \textcolor[HTML]{008000}{+1.8}}$ & $44.9_{\scriptscriptstyle \textcolor[HTML]{008000}{+1.5}}$ & $58.0_{\scriptscriptstyle \textcolor[HTML]{008000}{+2.1}}$ & $64.0_{\scriptscriptstyle \textcolor[HTML]{008000}{+2.0}}$ & $79.4_{\scriptscriptstyle \textcolor[HTML]{008000}{+3.4}}$ & $90.6_{\scriptscriptstyle \textcolor[HTML]{008000}{+1.8}}$ & $47.2_{\scriptscriptstyle \textcolor[HTML]{008000}{+1.4}}$ & $65.5_{\scriptscriptstyle \textcolor[HTML]{008000}{+3.2}}$ & $70.7_{\scriptscriptstyle \textcolor[HTML]{008000}{+2.5}}$ \\
PCGrad & $73.0_{\scriptscriptstyle \textcolor[HTML]{008000}{+3.2}}$ & $80.1_{\scriptscriptstyle \textcolor[HTML]{008000}{+1.2}}$ & $44.6_{\scriptscriptstyle \textcolor[HTML]{008000}{+1.2}}$ & $58.4_{\scriptscriptstyle \textcolor[HTML]{008000}{+2.5}}$ & $64.0_{\scriptscriptstyle \textcolor[HTML]{008000}{+2.0}}$ & $79.4_{\scriptscriptstyle \textcolor[HTML]{008000}{+3.4}}$ & $90.6_{\scriptscriptstyle \textcolor[HTML]{008000}{+1.8}}$ & $46.1_{\scriptscriptstyle \textcolor[HTML]{008000}{+0.3}}$ & $64.0_{\scriptscriptstyle \textcolor[HTML]{008000}{+1.7}}$ & $70.0_{\scriptscriptstyle \textcolor[HTML]{008000}{+1.8}}$ \\
\midrule[\heavyrulewidth]
\multicolumn{11}{c}{\textbf{Qwen3-8B-Base}} \\
Base & $81.8_{\phantom{\scriptscriptstyle +0.0}}$ & $85.7_{\phantom{\scriptscriptstyle +0.0}}$ & $44.5_{\phantom{\scriptscriptstyle +0.0}}$ & $61.7_{\phantom{\scriptscriptstyle +0.0}}$ & $68.4_{\phantom{\scriptscriptstyle +0.0}}$ & $85.0_{\phantom{\scriptscriptstyle +0.0}}$ & $95.1_{\phantom{\scriptscriptstyle +0.0}}$ & $47.6_{\phantom{\scriptscriptstyle +0.0}}$ & $70.0_{\phantom{\scriptscriptstyle +0.0}}$ & $74.4_{\phantom{\scriptscriptstyle +0.0}}$ \\
\cdashlinelr{1-11}
V. SFT & $75.3_{\phantom{\scriptscriptstyle +0.0}}$ & $84.8_{\phantom{\scriptscriptstyle +0.0}}$ & $46.8_{\phantom{\scriptscriptstyle +0.0}}$ & $63.4_{\phantom{\scriptscriptstyle +0.0}}$ & $67.6_{\phantom{\scriptscriptstyle +0.0}}$ & $80.6_{\phantom{\scriptscriptstyle +0.0}}$ & $93.4_{\phantom{\scriptscriptstyle +0.0}}$ & $49.0_{\phantom{\scriptscriptstyle +0.0}}$ & $69.1_{\phantom{\scriptscriptstyle +0.0}}$ & $73.0_{\phantom{\scriptscriptstyle +0.0}}$ \\
\cdashlinelr{1-11}
MGDA & $77.9_{\scriptscriptstyle \textcolor[HTML]{008000}{+2.6}}$ & $85.3_{\scriptscriptstyle \textcolor[HTML]{008000}{+0.5}}$ & $50.0_{\scriptscriptstyle \textcolor[HTML]{008000}{+3.2}}$ & $64.8_{\scriptscriptstyle \textcolor[HTML]{008000}{+1.4}}$ & $69.5_{\scriptscriptstyle \textcolor[HTML]{008000}{+1.9}}$ & $83.9_{\scriptscriptstyle \textcolor[HTML]{008000}{+3.3}}$ & $94.0_{\scriptscriptstyle \textcolor[HTML]{008000}{+0.6}}$ & $51.3_{\scriptscriptstyle \textcolor[HTML]{008000}{+2.3}}$ & $71.0_{\scriptscriptstyle \textcolor[HTML]{008000}{+1.9}}$ & $75.1_{\scriptscriptstyle \textcolor[HTML]{008000}{+2.1}}$ \\
CAGrad & $78.4_{\scriptscriptstyle \textcolor[HTML]{008000}{+3.1}}$ & $85.2_{\scriptscriptstyle \textcolor[HTML]{008000}{+0.4}}$ & $49.0_{\scriptscriptstyle \textcolor[HTML]{008000}{+2.2}}$ & $65.2_{\scriptscriptstyle \textcolor[HTML]{008000}{+1.8}}$ & $69.5_{\scriptscriptstyle \textcolor[HTML]{008000}{+1.9}}$ & $83.6_{\scriptscriptstyle \textcolor[HTML]{008000}{+3.0}}$ & $94.0_{\scriptscriptstyle \textcolor[HTML]{008000}{+0.6}}$ & $49.8_{\scriptscriptstyle \textcolor[HTML]{008000}{+0.8}}$ & $71.9_{\scriptscriptstyle \textcolor[HTML]{008000}{+2.8}}$ & $74.8_{\scriptscriptstyle \textcolor[HTML]{008000}{+1.8}}$ \\
PCGrad & $78.2_{\scriptscriptstyle \textcolor[HTML]{008000}{+2.9}}$ & $85.4_{\scriptscriptstyle \textcolor[HTML]{008000}{+0.6}}$ & $48.4_{\scriptscriptstyle \textcolor[HTML]{008000}{+1.6}}$ & $64.6_{\scriptscriptstyle \textcolor[HTML]{008000}{+1.2}}$ & $69.2_{\scriptscriptstyle \textcolor[HTML]{008000}{+1.6}}$ & $83.8_{\scriptscriptstyle \textcolor[HTML]{008000}{+3.2}}$ & $94.0_{\scriptscriptstyle \textcolor[HTML]{008000}{+0.6}}$ & $49.1_{\scriptscriptstyle \textcolor[HTML]{008000}{+0.1}}$ & $70.3_{\scriptscriptstyle \textcolor[HTML]{008000}{+1.2}}$ & $74.3_{\scriptscriptstyle \textcolor[HTML]{008000}{+1.3}}$ \\
\bottomrule[\heavyrulewidth]
\end{tabular}
}
\end{table*}

\paragraph{Performance on Seen Languages}
Table \ref{tab:main_compact_aligned} demonstrates that reformulating MLF as an MOO problem systematically and comprehensively mitigates negative interference. Unlike conventional fine-tuning, which forces compromises between languages, all Bucket-Level MOO strategies yield strictly positive improvements over Vanilla SFT across all four base models. The gains are substantial, increasing the average seen performance by robust margins of +1.6 to +2.9. These improvements span both reading comprehension (e.g., CAGrad drives a +4.1 BELE improvement on \texttt{Llama-3.1-8B}) and complex technical reasoning (e.g., MGDA yields a +3.2 PolyMath boost on \texttt{Qwen3-8B-Base}).

\paragraph{Generalization to Unseen Languages}
The structural advantages of gradient-based MOO are even more pronounced under zero-shot cross-lingual transfer to the 5 unseen languages. By resolving conflicts locally, the model is prevented from severely overfitting to the dominant distributions of the 8 seen languages. Consequently, Bucket-Level MOO achieves massive out-of-distribution average gains, reaching up to +2.5 on \texttt{Llama-3.1-8B} (CAGrad) and +2.7 on \texttt{Qwen3-4B-Base} (MGDA). These improvements showcase robust boosts in both world knowledge (MMLU +3.7 on \texttt{Qwen3-4B-Base}) and reasoning (PolyMath +3.8 on \texttt{Llama-3.1-8B}). This confirms that mitigating gradient conflicts fosters a highly generalized cross-lingual representation space.

\paragraph{Mitigating Catastrophic Forgetting}
Our results reveal a systematic ``alignment tax'' where Vanilla SFT aggressively overwrites and degrades specific base-model capabilities. This is particularly evident in the BELE benchmark, where standard fine-tuning collapses the model's pre-trained reading comprehension capacity. Crucially, gradient-based MOO acts as a powerful implicit regularizer that buffers this catastrophic forgetting across both seen and unseen distributions. On seen BELE, Vanilla SFT drastically drops \texttt{Llama-3.1-8B} from 59.1 down to 53.1; however, Bucket-Level CAGrad successfully recovers it to 57.2. Similarly, on unseen BELE, Vanilla SFT induces a severe drop for \texttt{Qwen3-8B-Base} (falling from 85.0 down to 80.6), which MGDA systematically limits by recovering the score to 83.9. Furthermore, while Vanilla SFT drags the overall average performance of inherently strong models below their pre-trained baselines (e.g., the \texttt{Qwen3-8B-Base} seen average drops from 68.4 to 67.6), MOO methods completely arrest this decline and propel the models to new peaks (e.g., MGDA reaching 69.5). This structural preservation confirms that resolving gradient conflicts directly prevents the destructive overwriting of shared pre-trained representations.

\subsection{Conventional MOO vs. Bucket-Level MOO}
\label{sec:moo_ablation}

Theoretically, Bucket-Level MOO naturally generalizes conventional MOO (the trivial global partition $\mathcal{P} = \{\bm{\theta}\}$). However, as established in Section \ref{sec:motivation}, applying MOO globally across an LLM introduces prohibitive memory and communication bottlenecks. We empirically validate this by post-training \texttt{Qwen3-4B-Base} on 8 NVIDIA H200 GPUs. Systemically, conventional MOO requires materializing full global gradient vectors, driving peak VRAM consumption to an unsustainable \textbf{123 GB}. Conversely, Bucket-Level MOO seamlessly leverages distributed memory partitioning, strictly operating within the highly efficient \textbf{72 GB} footprint of Vanilla SFT. 

\begin{wraptable}{r}{0.63\textwidth} 
    \centering
    \vspace{-0.6cm}
    \caption{Performance comparison between conventional MOO (w/o BK) and Bucket-Level MOO (w/ BK) on \texttt{Qwen3-4B-Base}.}
\label{tab:moo_ablation}
\renewcommand{\arraystretch}{1.2}
\resizebox{\linewidth}{!}{
\begin{tabular}{lcc cc cc}
\toprule
\multirow{2}{*}{\textbf{Metric}} & \multicolumn{2}{c}{\textbf{MGDA}} & \multicolumn{2}{c}{\textbf{CAGrad}} & \multicolumn{2}{c}{\textbf{PCGrad}} \\
\cmidrule(lr){2-3} \cmidrule(lr){4-5} \cmidrule(lr){6-7}
& w/o BK & \textbf{w/ BK} & w/o BK & \textbf{w/ BK} & w/o BK & \textbf{w/ BK} \\
\midrule[\heavyrulewidth]
Seen Avg.   & 63.7 & \textbf{63.9} & 63.1 & \textbf{64.0} & 63.6 & \textbf{64.0} \\
Unseen Avg. & 70.6 & \textbf{70.9} & 70.0 & \textbf{70.7} & \textbf{70.8} & 70.0 \\
\bottomrule[\heavyrulewidth]
\end{tabular}
}
\end{wraptable}

Beyond hardware efficiency, Table \ref{tab:moo_ablation} demonstrates that Bucket-Level MOO yields superior representation quality. Across nearly all configurations, bucket-level gradient manipulation drives stronger performance, prominently seen with CAGrad securing a +0.9 improvement on seen languages (63.1 to 64.0) and a +0.7 gain on unseen generalization (70.0 to 70.7). Similarly, Bucket-Level MGDA consistently edges out its global counterpart. This empirical advantage corroborates our core theoretical claim: by enforcing Refined Pareto Stationarity, Bucket-Level MOO natively isolates and resolves the severe, layer-wise gradient conflicts that are otherwise diluted during global aggregation, ultimately producing a more refined cross-lingual alignment.

\subsection{Mechanistic Interpretation}
\label{sec:ablation}

To evaluate how Bucket-Level MOO reshapes the internal representation space, we extract the hidden states of the final input tokens\footnote{For instance, the \texttt{`<|im\_start|>assistant\textbackslash n'} token in \texttt{Qwen} models.} across all 13,040 post-training samples of the 8 seen languages. We utilize the Silhouette score \citep{ROUSSEEUW198753} to quantify the clustering quality of these embeddings with respect to their source languages. This metric penalizes high intra-class variance and rewards large inter-class margins, making it ideal for evaluating language separability. Furthermore, following recent interpretability studies \citep{wei2024assessing, chen2025the, leng-xiong-2025-towards}, we identify critical neurons activated during language processing to calculate the ratio of \textit{language-specific neurons}—those highly specialized to process a single language (details in Appendix \ref{app:neurons}).

As shown in Figure \ref{fig:ablation} (Left), Bucket-Level MOO methods induce significantly higher Silhouette scores than Vanilla SFT, particularly in the earlier or later layers of the network. This divergence indicates a much more compact and separable clustering of languages. Furthermore, Figure \ref{fig:ablation} (Right) reveals that gradient-based MOO consistently increases the proportion of language-specific neurons across all evaluated models. Notably, inherently stronger multilingual models (e.g., the \texttt{Qwen3} family) naturally allocate a higher baseline ratio of these specialized neurons ($\sim$3-4\%), a capacity that MOO methods further expand. This structural shift implies that mitigating gradient conflicts actively drives the model to carve out orthogonal, language-specific dimensions. By isolating these linguistic properties rather than forcing competing features into a shared representation bottleneck, the model disentangles the representation space. Consequently, it can safely project language-agnostic semantics into shared subspaces without incurring negative interference, ultimately yielding a more refined and capable multilingual representation.

\begin{figure*}[t]
    \centering
    \includegraphics[width=0.33\textwidth]{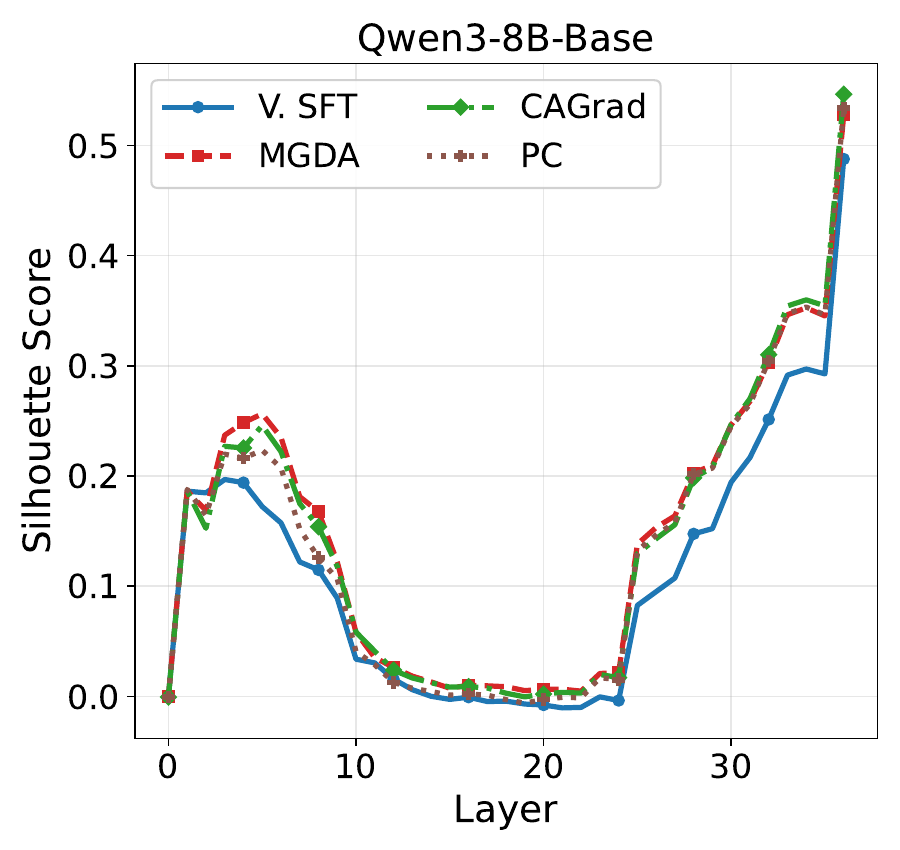}
    \includegraphics[width=0.66\textwidth]{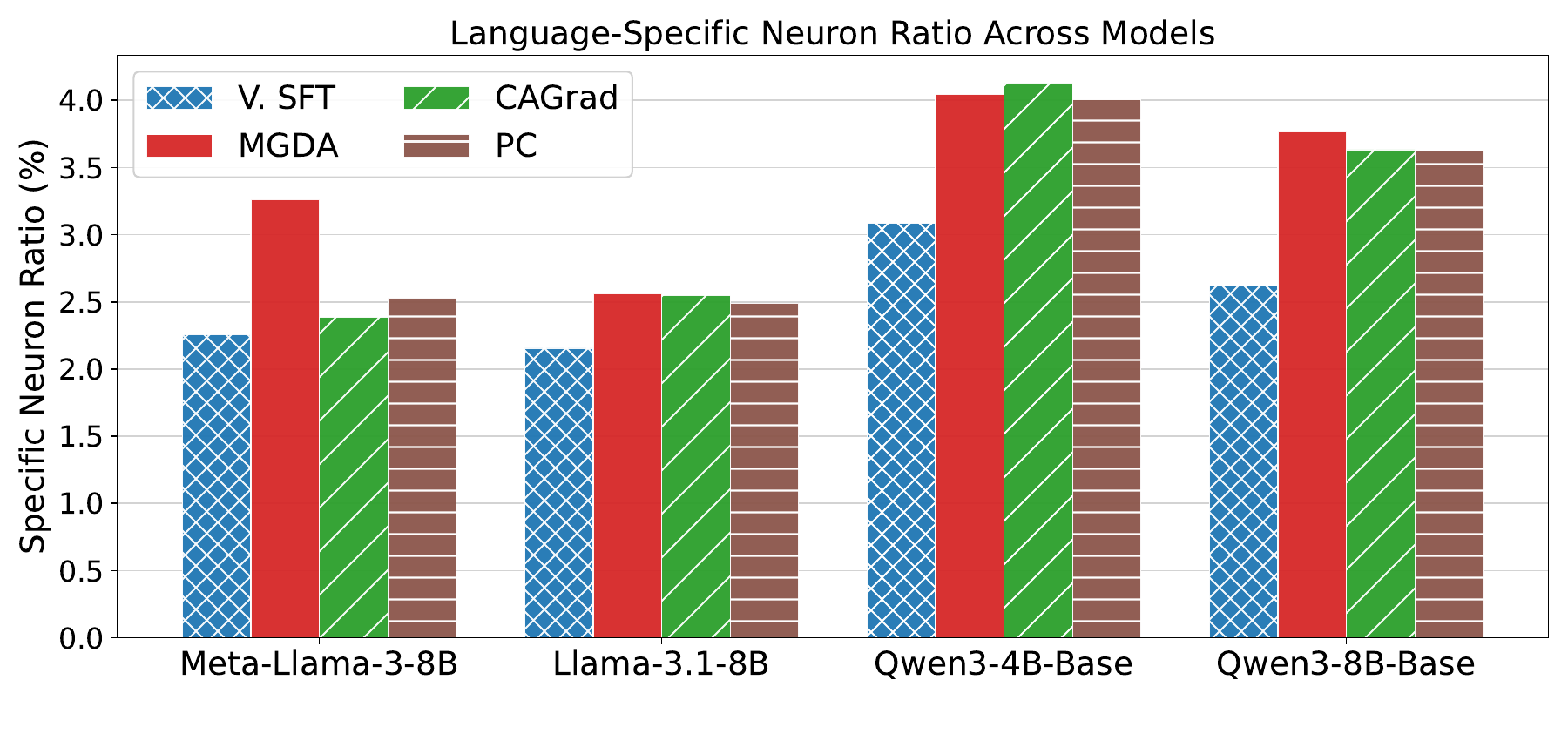}
    \vspace{-0.3cm}
    \caption{How Bucket-Level MOO affects the internal structure of LLMs. \textbf{Left:} Layer-wise Silhouette scores measuring the representational clustering of languages in \texttt{Qwen3-8B-Base}\protect\footnotemark. \textbf{Right:} The percentage of language-specific neurons (neurons activated solely by a single language) across different model architectures.}
    \label{fig:ablation}
\end{figure*}
\footnotetext{Similar patterns in other models are provided in Appendix \ref{app:represenation}.}

\section{Related Work}

\paragraph{Multilingual LLMs and Negative Interference}
Modern LLMs project diverse languages into a shared parameter space \citep{chen2025the, yang2025qwen3technicalreport}, where these languages occupy distinct representational regions \citep{chang-etal-2022-geometry, xu2026languagethoughtshapesoutput}. Structurally, models often exhibit a division of labor: early and late layers handle multilingual specializations, while middle layers serve as language-universal mechanisms \citep{zhao2024how, bandarkar2026multilingual}. While targeted activation interventions improve reasoning \citep{zhao2026when}, cross-lingual generalization is predominantly driven by data-centric strategies like translating instruction corpora \citep{pan-etal-2024-g, chen-etal-2024-monolingual}, self-distillation \citep{zhang-etal-2024-enhancing-multilingual}, and multi-task tuning \citep{muennighoff-etal-2023-crosslingual}. However, forcing disparate linguistic distributions into a static parameter space inherently triggers \textit{negative interference}, where optimizing for one language disproportionately degrades others \citep{conneau-etal-2020-unsupervised, wang-etal-2020-negative}.

\paragraph{Gradient Conflicts and Scalable Optimization in Distributed LLMs}
From an optimization perspective, negative interference fundamentally stems from conflicting gradients across language-specific objectives \citep{wang2021gradient, shi2023recon}. Beyond architectural modifications \citep{zhang-etal-2025-less} or multi-stage fine-tuning \citep{ye2025synergy}, gradient-based Multi-Objective Optimization (MOO)—such as MGDA \citep{desideri2012multiple, NeurIPS2018_Sener_Koltun}, PCGrad \citep{yu2020gradient}, and CAGrad \citep{liu2021conflict}—offers a principled solution. MOO mitigates conflicts by guaranteeing a common descent direction without altering the underlying model architecture \citep{wang2021gradient, mao2022lessforgetting}. However, applying global MOO to large-scale LLMs is severely bottlenecked by distributed training frameworks like FSDP \citep{zhao2023pytorchfsdpexperiencesscaling} and DeepSpeed ZeRO \citep{rajbhandari2020zeromemoryoptimizationstraining}. Because these paradigms partition gradients across GPUs to conserve memory, reconstructing the full global gradient vector at every step introduces prohibitive \texttt{All-Gather} communication overhead, rendering standard global MOO computationally unscalable.

\section{Conclusion}

In this work, we addressed the pervasive issue of negative interference in multilingual LLM fine-tuning by reformulating it as an MOO problem. Recognizing that global gradient aggregation both obscures layer-specific linguistic conflicts and introduces prohibitive communication overhead in distributed training, we introduced Bucket-Level MOO. By intercepting and resolving gradient conflicts locally within parameter memory buckets, our framework bypasses the scalability bottlenecks of conventional algorithms like MGDA, CAGrad, and PCGrad to modern architectures with billions of parameters.

Theoretically, we prove this method natively enforces Refined Pareto Stationarity, a strictly tighter necessary condition for Pareto optimality. Empirically, by disentangling language-specific dimensions, Bucket-Level MOO curbs negative interference and drives robust cross-lingual generalization across both seen and unseen distributions. Ultimately, this work bridges the gap between theoretical optimization and practical systems engineering, demonstrating that structural awareness of distributed training paradigms can unlock highly scalable and interference-free multilingual alignment.

\bibliographystyle{apalike}
\bibliography{references}


\appendix

\vspace{1cm}

\section{Formal Definitions and Theorems for Related MOO Concepts}
\label{app:formal}
\begin{definition}[Pareto Optimality]
    A solution $\bm{\theta}^*$ is termed \textbf{Pareto optimal} if there does not exist any $\bm{\theta}$ such that $\mathcal{L}_t(\bm{\theta}) \leq \mathcal{L}_t(\bm{\theta}^*)$ for all $t \in \{1, \dots, T\}$, and $\mathbf{L}(\bm{\theta}) \neq \mathbf{L}(\bm{\theta}^*)$.
\end{definition}

\begin{definition}[Pareto Stationarity]
    A solution $\bm{\theta}^*$ is \textbf{Pareto stationary} if and only if:
    \begin{align}
        \mathbf{0} \in \text{conv}\left\{ \nabla_{\bm{\theta}} \mathcal{L}_1(\bm{\theta}^*), \dots, \nabla_{\bm{\theta}}\mathcal{L}_T(\bm{\theta}^*)\right\},
    \end{align}
    meaning there exists a vector $\bm{\alpha} \in \Delta$ such that $\sum_{t=1}^T \alpha_t\nabla_{\bm{\theta}}\mathcal{L}_t(\bm{\theta}^*)=\mathbf{0}$.
    \label{def:ps}
\end{definition}
Here, $\Delta=\left\{ \bm{\alpha} \in \mathbb{R}^T \mid \bm{\alpha} \geq \mathbf{0}, \sum_{t=1}^T \alpha_t = 1\right\}$ denotes the standard simplex in $T$-dimensional space.

\paragraph{Refined Pareto Stationarity.} While standard stationarity treats $\bm{\theta}$ globally, exploiting variable structure yields tighter optimality criteria \citep{hu2025leveraging}. Assume $\bm{\theta}$ is partitioned into $K$ disjoint blocks $\mathcal{P} = \left\{ \bm{\theta}_{(1)},\dots, \bm{\theta}_{(K)}\right\}$ (e.g., layers). Let $\nabla_{\bm{\theta}_{(k)}}\mathbf{L}(\bm{\theta})$ be the collection of objective gradients with respect to block $\bm{\theta}_{(k)}$.

\begin{definition}[Refined Pareto Stationarity]
    \label{def:RPS}
    A solution $\bm{\theta}^*$ is \textbf{refined Pareto stationary} with respect to partition $\mathcal{P}$ if and only if:
    \begin{align}
        \mathbf{0} \in \prod_{k=1}^{K} \text{conv}\left\{ \nabla_{\bm{\theta}_{(k)}}\mathbf{L}(\bm{\theta}^*)\right\} \quad \text{(Cartesian product)},
    \end{align}
    i.e., for all blocks $k \in [K]$, there exists a local weight $\bm{\alpha}^k \in \Delta$ such that $\sum_{t=1}^T \alpha^k_t\nabla_{\bm{\theta}_{(k)}}\mathcal{L}_t(\bm{\theta}^*)=\mathbf{0}$.
\end{definition}

For the trivial partition $\mathcal{P} = \{\bm{\theta}\}$, this reduces to standard Pareto stationarity.

\begin{theorem}[\cite{hu2025leveraging}]
    Any Pareto optimal solution is a refined Pareto stationary solution. Furthermore, any refined Pareto stationary solution is a Pareto stationary solution.
    \label{theorem:RPS}
\end{theorem}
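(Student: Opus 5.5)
The plan is to treat the two inclusions separately. Both rely on Gordan's theorem of the alternative: for vectors $v_1,\dots,v_T$, either $\mathbf{0}\in\mathrm{conv}\{v_1,\dots,v_T\}$, or there is a direction $\mathbf{d}$ with $\langle v_t,\mathbf{d}\rangle<0$ for all $t$, and never both. Throughout I assume each $\mathcal{L}_t$ is continuously differentiable at $\bm{\theta}^*$.

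\textbf{Pareto optimal $\Rightarrow$ refined Pareto stationary.} I argue by contraposition.
\begin{enumerate}
\item Suppose $\bm{\theta}^*$ is not refined Pareto stationary with respect to $\mathcal{P}$. Then for some block $k$ we have $\mathbf{0}\notin\mathrm{conv}\{\nabla_{\bm{\theta}_{(k)}}\mathcal{L}_t(\bm{\theta}^*)\}_{t=1}^T$.
\item Gordan's theorem gives $\mathbf{d}_{(k)}$ with $\langle\nabla_{\bm{\theta}_{(k)}}\mathcal{L}_t(\bm{\theta}^*),\mathbf{d}_{(k)}\rangle<0$ for every $t$.
\item Extend $\mathbf{d}_{(k)}$ by zeros on all other blocks to get a global direction $\mathbf{d}$. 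Because the blocks are disjoint, $\langle\nabla_{\bm{\theta}}\mathcal{L}_t(\bm{\theta}^*),\mathbf{d}\rangle=\langle\nabla_{\bm{\theta}_{(k)}}\mathcal{L}_t(\bm{\theta}^*),\mathbf{d}_{(k)}\rangle<0$ for all $t$.
\item A first-order Taylor expansion, with a remainder that is $o(\eta)$ uniformly over the finitely many $t$, shows $\mathcal{L}_t(\bm{\theta}^*+\eta\mathbf{d})<\mathcal{L}_t(\bm{\theta}^*)$ for all $t$ and all small enough $\eta>0$.
\item This point dominates $\bm{\theta}^*$ strictly in every objective, so $\mathbf{L}(\bm{\theta}^*+\eta\mathbf{d})\neq\mathbf{L}(\bm{\theta}^*)$, contradicting Pareto optimality.
\end{enumerate}
This part is routine.

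\textbf{Refined Pareto stationary $\Rightarrow$ Pareto stationary.} Here I would again use the dual form via Gordan's theorem. Suppose $\bm{\theta}^*$ is not Pareto stationary, so some global $\mathbf{d}=[\mathbf{d}_{(1)},\dots,\mathbf{d}_{(K)}]$ satisfies $\sum_k\langle\nabla_{\bm{\theta}_{(k)}}\mathcal{L}_t,\mathbf{d}_{(k)}\rangle<0$ for all $t$. The goal would be to show that some single block $k$ then admits a common descent direction, which would contradict refined Pareto stationarity. The first thing I would try is to pair the local weights $\bm{\alpha}^k$ from the RPS definition against $\mathbf{d}$ and sum over blocks, hoping to produce one weight vector that certifies $\mathbf{0}$ in the global hull.

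\textbf{Main obstacle.} The second inclusion is where I expect the argument to break, because the local weights $\bm{\alpha}^k$ may differ from block to block. Summing $\sum_t\alpha^k_t\langle\nabla_{\bm{\theta}_{(k)}}\mathcal{L}_t,\mathbf{d}_{(k)}\rangle=0$ over $k$ does not produce a single convex combination. In fact, under the definition exactly as written, the inclusion can fail. Take $T=K=2$ with block gradients $(\nabla_{(1)}\mathcal{L}_1,\nabla_{(1)}\mathcal{L}_2)=(1,0)$ and $(\nabla_{(2)}\mathcal{L}_1,\nabla_{(2)}\mathcal{L}_2)=(0,1)$. Then $\bm{\alpha}^1=(0,1)$ and $\bm{\alpha}^2=(1,0)$ witness refined Pareto stationarity, but $\alpha_1(1,0)+\alpha_2(0,1)=\mathbf{0}$ has no solution on the simplex, so $\bm{\theta}^*$ is not Pareto stationary. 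Note also that $\mathrm{conv}\{\nabla_{\bm{\theta}}\mathcal{L}_t\}\subseteq\prod_k\mathrm{conv}\{\nabla_{\bm{\theta}_{(k)}}\mathcal{L}_t\}$, so what follows directly from the written definition is the reverse inclusion, Pareto stationary $\Rightarrow$ refined Pareto stationary.

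To recover the second sentence of Theorem~\ref{theorem:RPS} as worded, I would therefore go back to the precise notion of refined stationarity used by \citet{hu2025leveraging}. I expect that notion to tie the block weights together, for instance by requiring a shared $\bm{\alpha}$ across blocks. If it does, common weights make $\sum_t\alpha_t\nabla_{\bm{\theta}}\mathcal{L}_t=\mathbf{0}$ immediate by concatenating the blocks, and the second inclusion follows directly.
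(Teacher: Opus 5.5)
Your first inclusion is correct. The Gordan/zero-extension argument is sound, and under Definition~\ref{def:RPS} as written it also follows from the classical $\mathrm{PO}\Rightarrow\mathrm{PS}$ combined with your inclusion $\mathrm{conv}\{\nabla_{\bm\theta}\mathcal{L}_t\}\subseteq\prod_k\mathrm{conv}\{\nabla_{\bm\theta_{(k)}}\mathcal{L}_t\}$.

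Your counterexample is also correct. The paper gives no proof of Theorem~\ref{theorem:RPS} to compare against, only the citation. With independent weights $\bm\alpha^k\in\Delta$ over all $T$ objectives on every block, PS implies RPS and not conversely, so the second sentence is false as stated. It stays false when both losses genuinely depend on both blocks. Take $\mathcal{L}_1=x+y^2$ and $\mathcal{L}_2=x^2+y$ at the origin, with blocks $\{x\},\{y\}$. That is exactly this paper's bucket-level regime, where every language loss depends on every bucket. In that example bucket-level MGDA returns $\mathbf{d}=\mathbf 0$ even though $-(1,1)$ is a common descent direction. So case (a) of Theorem~\ref{theorem:shard-mgda} certifies a weaker condition than global stationarity, not a stronger one.

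The weak point of your proposal is the repair. If the weights are shared across blocks, the block equations concatenate to $\sum_t\alpha_t\nabla_{\bm\theta}\mathcal{L}_t=\mathbf 0$, and conversely. RPS then becomes identical to PS. The second inclusion turns into a tautology and the claimed strict refinement (Figure~\ref{fig:euler_diagram}) disappears, so this cannot be the cited notion.

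As I understand \citet{hu2025leveraging}, the refinement there instead exploits function--variable sparsity. Each block's hull is taken only over the objectives that actually depend on that block, and the partition is read off the dependency structure. Under that definition your first argument goes through almost verbatim. Objectives independent of block $k$ are unchanged by the zero-extended step and the rest strictly decrease, so $\mathbf L$ weakly decreases and changes.

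$\mathrm{RPS}\Rightarrow\mathrm{PS}$ then holds for suitable structures. One example is a shared block plus one private block per objective. RPS forces each objective's gradient on its own private block to vanish, so the shared-block weights $\bm\alpha$ already give $\sum_t\alpha_t\nabla_{\bm\theta}\mathcal{L}_t=\mathbf 0$. Under dense dependence the restriction is vacuous and the example above applies. So the correct conclusion is that, with this paper's definition and setting, only $\mathrm{PO}\Rightarrow\mathrm{PS}\Rightarrow\mathrm{RPS}$ can be proved.
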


\begin{definition}[Conflicting Gradients]
    Two gradients $\nabla_{\bm{\theta}}\mathcal{L}_i$ and $\nabla_{\bm{\theta}}\mathcal{L}_j$ are defined as conflicting if their cosine similarity is negative, i.e., $\cos\left(\nabla_{\bm{\theta}}\mathcal{L}_i, \nabla_{\bm{\theta}}\mathcal{L}_j\right) < 0$, or equivalently, $\langle \nabla_{\bm{\theta}}\mathcal{L}_i, \nabla_{\bm{\theta}}\mathcal{L}_j \rangle < 0$.
\end{definition}

\begin{theorem}[\cite{desideri2012multiple}]
    Let $\alpha_1,\dots,\alpha_T$ be the solution to Problem \ref{eq:mgda}, and let $\mathbf{d}^{\text{MGDA}} = \sum_{t=1}^T\alpha_t \nabla_{\bm{\theta}}\mathcal{L}_t$ be the resulting update vector. Then, one of the following is true: 
    \begin{enumerate}
        \item[(a)] $\|\mathbf{d}^{\text{MGDA}}\|_2=0$ and the current parameters are Pareto stationary.
        \item[(b)] $-\mathbf{d}^{\text{MGDA}}$ is a descent direction that decreases all objectives.
    \end{enumerate}
    \label{theorem:mgda_global}
\end{theorem}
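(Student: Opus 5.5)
The statement immediately above is Theorem~\ref{theorem:mgda_global}, the classical result of \citet{desideri2012multiple}. I would prove it directly from the first-order optimality conditions of the minimum-norm problem (Equation~\ref{eq:mgda}). Write $g_t := \nabla_{\bm{\theta}}\mathcal{L}_t$ and $C := \text{conv}\{g_1,\dots,g_T\}$. The problem is then the Euclidean projection of the origin onto the compact convex set $C$. A minimizer $\mathbf{d} := \mathbf{d}^{\text{MGDA}} = \sum_t \alpha_t g_t$ therefore exists, and it is unique as a vector even if $\bm{\alpha}$ is not.

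The argument splits into two cases according to $\|\mathbf{d}\|_2$.

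\textbf{Case (a): $\|\mathbf{d}\|_2 = 0$.} Then $\sum_t \alpha_t g_t = \mathbf{0}$ with $\bm{\alpha}\in\Delta$. This is exactly Definition~\ref{def:ps}, so the current parameters are Pareto stationary and nothing more needs to be shown.

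\textbf{Case (b): $\mathbf{d} \neq \mathbf{0}$.} I would invoke the variational inequality characterizing projections onto convex sets:
\begin{equation*}
\langle \mathbf{d}, \mathbf{v} - \mathbf{d}\rangle \ge 0 \quad \text{for all } \mathbf{v} \in C .
\end{equation*}
To derive it, take any $\mathbf{v}\in C$ and form the segment $\mathbf{d} + s(\mathbf{v}-\mathbf{d})$ for $s\in[0,1]$, which stays in $C$ by convexity. Expand $\|\mathbf{d}+s(\mathbf{v}-\mathbf{d})\|^2 \ge \|\mathbf{d}\|^2$, divide by $s$, and let $s\to 0^+$. Choosing $\mathbf{v} = g_t$ for each vertex gives
\begin{equation*}
\langle g_t, \mathbf{d}\rangle \ge \|\mathbf{d}\|_2^2 > 0 \quad \text{for every } t .
\end{equation*}
By a first-order Taylor expansion, $\mathcal{L}_t(\bm{\theta} - \eta\mathbf{d}) = \mathcal{L}_t(\bm{\theta}) - \eta\langle g_t,\mathbf{d}\rangle + o(\eta)$. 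Hence for all sufficiently small $\eta>0$, and for all $t$ simultaneously (there are finitely many), every objective strictly decreases. So $-\mathbf{d}$ is a common descent direction.

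The only step requiring care is the variational inequality. In particular, one must note that it needs only optimality of $\mathbf{d}$ over $C$, not uniqueness of $\bm{\alpha}$. The rest is routine. The same inequality, applied bucket-wise and summed over $k$, is what I would reuse for Theorem~\ref{theorem:shard-mgda}.
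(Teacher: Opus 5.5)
Your proof is correct and follows the paper's argument. The paper only cites this result from Désidéri; its proof of Theorem~\ref{theorem:shard-mgda} reduces to this statement for the trivial partition $K=1$. That proof runs the same case split on $\|\mathbf{d}^{\text{MGDA}}\|_2$, using the min-norm inequality $\langle \nabla_{\bm{\theta}}\mathcal{L}_t, \mathbf{d}^{\text{MGDA}}\rangle \ge \|\mathbf{d}^{\text{MGDA}}\|_2^2$. The only difference is that you derive this inequality from the projection variational inequality rather than citing it, which makes your version self-contained.
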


\begin{theorem}[\cite{liu2021conflict}]
    Assume the loss functions $\mathcal{L}_t$ are differentiable and have Lipschitz continuous gradients. If the step size $\eta$ is sufficiently small, the CAGrad update direction $\mathbf{d}^{\text{CAGrad}}$ ensures monotonic decrease of the average loss $\mathcal{L}_{avg} = \frac{1}{T}\sum_{t=1}^T \mathcal{L}_t$. Specifically, the algorithm converges to a Pareto stationary point $\bm{\theta}^*$ such that $\|\nabla \mathcal{L}_{avg}(\bm{\theta}^*)\| = 0$.
    \label{theorem:cagrad_global}
\end{theorem}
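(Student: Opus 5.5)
The plan is the standard descent-lemma argument for the average loss, using only the trust-region constraint in Equation \ref{eq:cagrad}; I never need the inner max--min structure of CAGrad. Write $\mathbf{g}_0 := \nabla\mathcal{L}_{avg}(\bm{\theta}) = \mathbf{g}_{\text{avg}}$ and let $\mathbf{d} := \mathbf{d}^{\text{CAGrad}}$. The update is $\bm{\theta}^+ = \bm{\theta} - \eta\mathbf{d}$.

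\textbf{Step 1 (a per-step descent inequality).} Any $\mathbf{d}$ feasible for Equation \ref{eq:cagrad} satisfies $\|\mathbf{d}-\mathbf{g}_0\|\le c\|\mathbf{g}_0\|$. Expanding that square with the polarization identity gives
\begin{equation*}
\langle \mathbf{g}_0,\mathbf{d}\rangle=\tfrac12\bigl(\|\mathbf{g}_0\|^2+\|\mathbf{d}\|^2-\|\mathbf{d}-\mathbf{g}_0\|^2\bigr)\ \ge\ \tfrac{1-c^2}{2}\|\mathbf{g}_0\|^2+\tfrac12\|\mathbf{d}\|^2 .
\end{equation*}
The triangle inequality also gives $\|\mathbf{d}\|\le(1+c)\|\mathbf{g}_0\|$.

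\textbf{Step 2 (monotone decrease).} Each $\mathcal{L}_t$ has $L$-Lipschitz gradient, so $\mathcal{L}_{avg}$ does too. The descent lemma then gives
\begin{equation*}
\mathcal{L}_{avg}(\bm{\theta}-\eta\mathbf{d})\le \mathcal{L}_{avg}(\bm{\theta})-\eta\langle\mathbf{g}_0,\mathbf{d}\rangle+\tfrac{L\eta^2}{2}\|\mathbf{d}\|^2 .
\end{equation*}
Substituting the Step 1 bound yields
\begin{equation*}
\mathcal{L}_{avg}(\bm{\theta}^+)\le\mathcal{L}_{avg}(\bm{\theta})-\tfrac{\eta(1-c^2)}{2}\|\mathbf{g}_0\|^2-\tfrac{\eta}{2}(1-L\eta)\|\mathbf{d}\|^2 .
\end{equation*}
With $\eta\le 1/L$ ("sufficiently small"), the average loss is non-increasing along the iterates, and it strictly decreases whenever $\mathbf{g}_0\neq\mathbf{0}$, since $c<1$.

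\textbf{Step 3 (stationarity).} Sum the Step 2 inequality over iterations $0,\dots,N-1$, assuming $\mathcal{L}_{avg}$ is bounded below by $\mathcal{L}^\star$ (true for cross-entropy losses). This gives
\begin{equation*}
\sum_{n<N}\|\nabla\mathcal{L}_{avg}(\bm{\theta}_n)\|^2\le \frac{2\bigl(\mathcal{L}_{avg}(\bm{\theta}_0)-\mathcal{L}^\star\bigr)}{\eta(1-c^2)} .
\end{equation*}
Hence $\|\nabla\mathcal{L}_{avg}(\bm{\theta}_n)\|\to0$, at rate $\min_{n<N}\|\nabla\mathcal{L}_{avg}(\bm{\theta}_n)\|=O(1/\sqrt N)$. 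Take any accumulation point $\bm{\theta}^*$ of the iterates. Continuity of the gradients gives $\nabla\mathcal{L}_{avg}(\bm{\theta}^*)=\mathbf{0}$, that is, $\sum_t\frac1T\nabla\mathcal{L}_t(\bm{\theta}^*)=\mathbf{0}$. The uniform weights $\bm{\alpha}=(1/T,\dots,1/T)\in\Delta$ then certify Pareto stationarity in the sense of Definition \ref{def:ps}.

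\textbf{Main obstacle.} The difficulty is making "converges to $\bm{\theta}^*$" precise rather than the algebra. The argument above only gives vanishing gradients and stationarity of limit points. The existence of an accumulation point needs an extra assumption, such as bounded sublevel sets of $\mathcal{L}_{avg}$ or bounded iterates. I would state this assumption explicitly and read the theorem as a statement about limit points.
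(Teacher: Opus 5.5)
Your proof is correct, and it is essentially the argument of \citet{liu2021conflict}, which the paper cites without reproducing. That argument uses only feasibility in the trust region plus the descent lemma with $\eta\le 1/L$, which gives the summability bound $\sum_{n<N}\|\nabla\mathcal{L}_{avg}(\bm{\theta}_n)\|^2\le 2(\mathcal{L}_{avg}(\bm{\theta}_0)-\mathcal{L}^\star)/(\eta(1-c^2))$. The caveats you flag (a lower bound on $\mathcal{L}_{avg}$, and an accumulation point to make ``converges to $\bm{\theta}^*$'' meaningful) are the right ones to add to the paper's informal statement. Your Step~1 is the same ball-constraint expansion the paper uses in its proof of Theorem~\ref{theorem:shard-cagrad}, except that you keep the $\tfrac12\|\mathbf{d}\|^2$ term the paper discards; that term is exactly what absorbs the quadratic part of the descent lemma at $\eta\le 1/L$.
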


\section{Proof of Theorem \ref{theorem:shard-mgda}}
\label{appendix:proof_shard_mgda}

In this section, we provide the formal proof that applying Multiple Gradient Descent Algorithm (MGDA) at the shard level guarantees a global descent direction that satisfies the stricter condition of Refined Pareto Stationarity.

\textbf{Theorem \ref{theorem:shard-mgda}} 
Let the model parameters $\bm{\theta}$ be partitioned into $K$ disjoint buckets $\{\bm{\theta}_{(k)}\}_{k=1}^K$. Let $\bm{\alpha}^{(k)}$ be the solution to the local MGDA problem on bucket $k$:
\begin{align}
    \min_{\bm{\alpha}^{(k)} \in \Delta} \left\| \sum_{t=1}^T \alpha^{(k)}_t \nabla_{\bm{\theta}_{(k)}}\mathcal{L}_t(\bm{\theta}) \right\|_2^2,
\end{align}
and let $\mathbf{d}^{\text{MGDA}}_{(k)} = \sum_{t=1}^T \alpha^{(k)}_t \nabla_{\bm{\theta}_{(k)}}\mathcal{L}_t(\bm{\theta})$ be the resulting local update direction. Define the global update direction as the concatenation $\mathbf{d}^{\text{MGDA}} = [\mathbf{d}^{\text{MGDA}}_{(1)}, \dots, \mathbf{d}^{\text{MGDA}}_{(K)}]$. Then:
\begin{enumerate}
    \item If $\mathbf{d}^{\text{MGDA}} = \mathbf{0}$, the solution is \textbf{Refined Pareto Stationary}.
    \item If $\mathbf{d}^{\text{MGDA}} \neq \mathbf{0}$, then $-\mathbf{d}^{\text{MGDA}}$ is a common descent direction for all objectives, i.e., $\langle \nabla_{\bm{\theta}}\mathcal{L}_t, \mathbf{d}^{\text{MGDA}} \rangle \geq 0$ for all $t=1,\dots,T$.
\end{enumerate}

\begin{proof}
The proof relies on the decomposition of the global inner product over the parameter buckets and the properties of the minimum-norm element in a convex hull.

\paragraph{1. Decomposition of Directional Derivatives.}
Consider the directional derivative of the loss $\mathcal{L}_t$ with respect to the global parameter vector $\bm{\theta}$ along the direction $\mathbf{d}^{\text{MGDA}}$. Since the parameter space is the Cartesian product of the buckets, the inner product decomposes additively:
\begin{align}
    \langle \nabla_{\bm{\theta}} \mathcal{L}_t, \mathbf{d}^{\text{MGDA}} \rangle &= \sum_{k=1}^K \langle \nabla_{\bm{\theta}_{(k)}} \mathcal{L}_t, \mathbf{d}^{\text{MGDA}}_{(k)} \rangle.
    \label{eq:decomp}
\end{align}

\paragraph{2. Local Descent Property.}
On each bucket $k$, the update direction $\mathbf{d}^{\text{MGDA}}_{(k)}$ is derived from the minimum-norm point within the convex hull of the local gradients. According to the properties of the minimum-norm point in convex geometry \citep{desideri2012multiple}, the following inequality holds for every constituent vector in the convex hull:
\begin{align}
    \forall t \in \{1, \dots, T\}: \quad \langle \nabla_{\bm{\theta}_{(k)}} \mathcal{L}_t, \mathbf{d}^{\text{MGDA}}_{(k)} \rangle \geq \|\mathbf{d}^{\text{MGDA}}_{(k)}\|_2^2.
    \label{eq:local_prop}
\end{align}
This inequality guarantees that for a specific bucket, the MGDA direction is a descent direction for \textit{all} tasks simultaneously, unless the bucket is already at a stationary point (in which case $\|\mathbf{d}^{\text{MGDA}}_{(k)}\|=0$).

\paragraph{3. Aggregation.}
Substituting the local inequality (\ref{eq:local_prop}) into the global decomposition (\ref{eq:decomp}):
\begin{align}
    \langle \nabla_{\bm{\theta}} \mathcal{L}_t, \mathbf{d}^{\text{MGDA}} \rangle &= \sum_{k=1}^K \langle \nabla_{\bm{\theta}_{(k)}} \mathcal{L}_t, \mathbf{d}^{\text{MGDA}}_{(k)} \rangle \geq \sum_{k=1}^K \|\mathbf{d}^{\text{MGDA}}_{(k)}\|_2^2 = \|\mathbf{d}^{\text{MGDA}}\|_2^2.
    \label{eq:global_bound}
\end{align}
Thus, for every task $t$, the directional derivative is upper-bounded by the negative squared norm of the global update vector:
\begin{align}
    \langle \nabla_{\bm{\theta}} \mathcal{L}_t, \mathbf{d}^{\text{MGDA}} \rangle \geq \|\mathbf{d}^{\text{MGDA}}\|_2^2.
\end{align}

\paragraph{4. Case Analysis.}
We now analyze the two possible cases for the value of $\|\mathbf{d}^{\text{MGDA}}\|_2^2$:

\textbf{Case (a): $\mathbf{d}^{\text{MGDA}} = \mathbf{0}$.} \\
If the global update vector is zero, then $\|\mathbf{d}^{\text{MGDA}}\|_2^2 = \sum_{k=1}^K \|\mathbf{d}^{\text{MGDA}}_{(k)}\|_2^2 = 0$. Since norms are non-negative, this implies $\|\mathbf{d}^{\text{MGDA}}_{(k)}\|_2 = 0$ for all $k=1,\dots,K$.
By the definition of MGDA, $\|\mathbf{d}^{\text{MGDA}}_{(k)}\|_2 = 0$ implies that $\mathbf{0}$ lies within the convex hull of the local gradients for shard $k$:
\begin{align}
    \mathbf{0} \in \text{conv}\left\{ \nabla_{\bm{\theta}_{(k)}}\mathcal{L}_1, \dots, \nabla_{\bm{\theta}_{(k)}}\mathcal{L}_T \right\} \quad \forall k.
\end{align}
Therefore:
\begin{align}
    \mathbf{0} \in \prod_{k=1}^K \text{conv}\left\{ \nabla_{\bm{\theta}_{(k)}}\mathbf{L}(\bm{\theta}) \right\}.
\end{align}
This is precisely the definition of \textbf{Refined Pareto Stationarity} with respect to the partition $\mathcal{P} = \{\bm{\theta}_{(1)}, \dots, \bm{\theta}_{(K)}\}$.

\textbf{Case (b): $\mathbf{d}^{\text{MGDA}} \neq \mathbf{0}$.} \\
If the update vector is non-zero, then $\|\mathbf{d}^{\text{MGDA}}\|_2^2 > 0$. From inequality (\ref{eq:global_bound}), we have:
\begin{align}
    \langle \nabla_{\bm{\theta}} \mathcal{L}_t, \mathbf{d}^{\text{MGDA}} \rangle \geq \|\mathbf{d}^{\text{MGDA}}\|_2^2 > 0 \quad \forall t.
\end{align}
Since the directional derivative is strictly negative for all $t$, $-\mathbf{d}^{\text{MGDA}}$ is a valid descent direction that simultaneously decreases the loss for all languages.
\end{proof}

\section{Proof of Theorem \ref{theorem:shard-cagrad}}
\label{appendix:proof_shard_cagrad}

In this section, we provide the formal proof that applying Conflict-Averse Gradient Descent (CAGrad) at the bucket level guarantees a global descent direction for the average loss function, satisfying the condition of Refined Pareto Stationarity under convergence.

\textbf{Theorem \ref{theorem:shard-cagrad}.} 
Let the model parameters $\bm{\theta}$ be partitioned into $K$ disjoint buckets $\{\bm{\theta}_{(k)}\}_{k=1}^K$. Let $\mathbf{g}^{avg}_{(k)} = \frac{1}{T}\sum_{t=1}^T \nabla_{\bm{\theta}_{(k)}}\mathcal{L}_t$ denote the local average gradient on bucket $k$. Let $\mathbf{d}^{\text{CAGrad}}_{(k)}$ be the solution to the local CAGrad optimization problem:
\begin{align}
    \mathbf{d}^{\text{CAGrad}}_{(k)} = \argmax_{\mathbf{d}_{(k)}} \min_{t} \langle \nabla_{\bm{\theta}_{(k)}}\mathcal{L}_t, \mathbf{d}_{(k)} \rangle \quad \text{s.t.} \quad \|\mathbf{d}_{(k)} - \mathbf{g}^{avg}_{(k)}\| \leq c \|\mathbf{g}^{avg}_{(k)}\|,
    \label{eq:local_cagrad_opt}
\end{align}
where $0 \leq c < 1$. Define the global update direction as $\mathbf{d}^{\text{CAGrad}} = [\mathbf{d}^{\text{CAGrad}}_{(1)}, \dots, \mathbf{d}^{\text{CAGrad}}_{(K)}]$. Then:
\begin{enumerate}
    \item[(a)] The global update $-\mathbf{d}^{\text{CAGrad}}$ is a descent direction for the global average loss $\mathcal{L}_{avg}(\bm{\theta}) = \frac{1}{T}\sum \mathcal{L}_t(\bm{\theta})$, satisfying:
    \begin{align}
        \langle \nabla_{\bm{\theta}} \mathcal{L}_{avg}, \mathbf{d} \rangle \geq \frac{1-c^2}{2} \|\nabla_{\bm{\theta}} \mathcal{L}_{avg}\|_2^2.
    \end{align}
    \item[(b)] If $\|\mathbf{d}^{\text{CAGrad}}\|_2 = 0$, the current parameters satisfy \textbf{Refined Pareto Stationarity} with uniform weights (i.e., $\mathbf{g}^{avg}_{(k)} = \mathbf{0}$ for all $k$).
\end{enumerate}

\begin{proof}
The proof relies on the decomposition of the global inner product over disjoint parameter shards and the geometric properties of the Euclidean ball constraint used in CAGrad.

\paragraph{1. Decomposition of the Average Gradient.}
Let $\mathbf{g}^{avg} = \nabla_{\bm{\theta}} \mathcal{L}_{avg}(\bm{\theta})$ be the global average gradient. Since the parameter space is the Cartesian product of the shards, the global gradient is the concatenation of local average gradients: $\mathbf{g}^{avg} = [(\mathbf{g}^{avg}_{(1)}), \dots, (\mathbf{g}^{avg}_{(k)})]$.
Consequently, the inner product between the global average gradient and the global update direction $\mathbf{d}$ decomposes additively:
\begin{align}
    \langle \nabla_{\bm{\theta}} \mathcal{L}_{avg}, \mathbf{d}^{\text{CAGrad}} \rangle = \sum_{k=1}^K \langle \mathbf{g}^{avg}_{(k)}, \mathbf{d}^{\text{CAGrad}}_{(k)} \rangle.
    \label{eq:cagrad_decomp}
\end{align}

\paragraph{2. Local Descent Property.}
Consider the optimization problem on a specific bucket $k$. The constraint requires the update direction $\mathbf{d}^{\text{CAGrad}}_{(k)}$ to lie within a ball of radius $c\|\mathbf{g}^{avg}_{(k)}\|$ centered at $\mathbf{g}^{avg}_{(k)}$:
\begin{align}
    \|\mathbf{d}^{\text{CAGrad}}_{(k)} - \mathbf{g}^{avg}_{(k)}\|^2 \leq c^2 \|\mathbf{g}^{avg}_{(k)}\|^2.
\end{align}
Expanding the squared norm on the left-hand side:
\begin{align}
    \|\mathbf{d}^{\text{CAGrad}}_{(k)}\|^2 - 2\langle \mathbf{d}^{\text{CAGrad}}_{(k)}, \mathbf{g}^{avg}_{(k)} \rangle + \|\mathbf{g}^{avg}_{(k)}\|^2 \leq c^2 \|\mathbf{g}^{avg}_{(k)}\|^2.
\end{align}
Rearranging terms to isolate the inner product $\langle \mathbf{d}^{\text{CAGrad}}_{(k)}, \mathbf{g}^{avg}_{(k)} \rangle$:
\begin{align}
    2\langle \mathbf{d}^{\text{CAGrad}}_{(k)}, \mathbf{g}^{avg}_{(k)} \rangle &\geq \|\mathbf{d}^{\text{CAGrad}}_{(k)}\|^2 + (1-c^2)\|\mathbf{g}^{avg}_{(k)}\|^2.
\end{align}
Since $\|\mathbf{d}^{\text{CAGrad}}_{(k)}\|^2 \geq 0$, we can lower bound the inner product as:
\begin{align}
    \langle \mathbf{d}^{\text{CAGrad}}_{(k)}, \mathbf{g}^{avg}_{(k)} \rangle \geq \frac{1-c^2}{2} \|\mathbf{g}^{avg}_{(k)}\|^2.
    \label{eq:local_cagrad_bound}
\end{align}
This inequality ensures that as long as $c < 1$ and the local gradient is non-zero, the update direction $\mathbf{d}^{\text{CAGrad}}_{(k)}$ always has a positive projection onto the local average gradient $\mathbf{g}^{avg}_{(k)}$.

\paragraph{3. Global Aggregation (Proof of Part a).}
Substituting the local bound (\ref{eq:local_cagrad_bound}) into the global decomposition (\ref{eq:cagrad_decomp}):
\begin{align}
    \langle \nabla_{\bm{\theta}} \mathcal{L}_{avg}, \mathbf{d}^{\text{CAGrad}} \rangle &= \sum_{k=1}^K \langle \mathbf{g}^{avg}_{(k)}, \mathbf{d}^{\text{CAGrad}}_{(k)} \rangle \\ 
    & \geq \sum_{k=1}^K \frac{1-c^2}{2} \|\mathbf{g}^{avg}_{(k)}\|^2 = \frac{1-c^2}{2} \sum_{k=1}^K \|\mathbf{g}^{avg}_{(k)}\|^2 = \frac{1-c^2}{2} \|\mathbf{g}^{avg}\|_2^2.
\end{align}
Given that $0 \le c < 1$, the term $\frac{1-c^2}{2}$ is strictly positive. Thus, $-\mathbf{d}$ is a valid descent direction for the average loss.

\paragraph{4. Stationarity (Proof of Part b).}
Assume the global update vector is zero, i.e., $\|\mathbf{d}^{\text{CAGrad}}\|_2 = 0$. This implies $\|\mathbf{d}^{\text{CAGrad}}_{(k)}\|_2 = 0$ for all buckets $k=1,\dots,K$.
Substituting $\mathbf{d}^{\text{CAGrad}}_{(k)} = \mathbf{0}$ into the local constraint inequality:
\begin{align}
    \|\mathbf{0} - \mathbf{g}^{avg}_{(k)}\| &\leq c \|\mathbf{g}^{avg}_{(k)}\| \\
    \|\mathbf{g}^{avg}_{(k)}\| &\leq c \|\mathbf{g}^{avg}_{(k)}\| \\
    (1-c) \|\mathbf{g}^{avg}_{(k)}\| &\leq 0.
\end{align}
Since $c < 1$, the term $(1-c)$ is strictly positive. Therefore, the only solution to this inequality is $\|\mathbf{g}^{avg}_{(k)}\| = 0$, which implies:
\begin{align}
    \mathbf{g}^{avg}_{(k)} = \sum_{t=1}^T \frac{1}{T} \nabla_{\bm{\theta}_{(k)}}\mathcal{L}_t = \mathbf{0}.
\end{align}
This holds for all $k$. In the context of Refined Pareto Stationarity, this corresponds to the specific case where the weighting vector $\bm{\alpha}^k$ is uniform ($\alpha_t = 1/T$) for all buckets.
Thus, the parameters satisfy the condition:
\begin{align}
    \mathbf{0} \in \prod_{k=1}^{K} \text{conv}\left\{ \nabla_{\bm{\theta}_{(k)}}\mathbf{L}(\bm{\theta})\right\},
\end{align}
confirming Refined Pareto Stationarity.
\end{proof}

\section{Algorithm: Bucket-Level Multi-Objective Optimization for Distributed LLM Fine-Tuning}
\label{app:alg}
\begin{algorithm}[tb]
\caption{Bucket-Level Multi-Objective Optimization for Distributed LLM Fine-Tuning}
\label{alg:bucket_moo}
\begin{algorithmic}[1]
\Require Global parameters $\bm{\theta}$ partitioned into $K$ gradient buckets $\{\bm{\theta}_{(1)}, \dots, \bm{\theta}_{(K)}\}$
\Require $T$ GPUs, each assigned to a language dataset $\mathcal{D}_t$ for $t \in \{1, \dots, T\}$
\Require Learning rate $\eta$, MOO strategy $\mathcal{M} \in \{\text{MGDA}, \text{CAGrad}, \text{PCGrad}\}$
\Require Conflict aversion parameter $c$ (for CAGrad)

\For{each training step}
    \State Sample a micro-batch $(x_t, y_t) \sim \mathcal{D}_t$ on each GPU $t$
    \State Compute language-specific loss $\mathcal{L}_t(\bm{\theta})$ on GPU $t$
    
    \Comment{\textit{Intercept the backward pass at the bucket level}}
    \For{bucket $k = 1, 2, \dots, K$ (triggered when bucket $k$ is full)}
        \State Compute local gradient segment $g_{(k), t} = \nabla_{\bm{\theta}_{(k)}} \mathcal{L}_t$ on GPU $t$
        
        \Comment{\textit{Synchronize bucket $k$ across GPUs to form local gradient set}}
        \State \texttt{All-Gather} gradients to form $G_{(k)} = \{g_{(k), 1}, \dots, g_{(k), T}\}$ on all GPUs
        
        \If{$\mathcal{M} == \text{MGDA}$}
            \State Solve local weights: $\bm{\alpha}^* = \arg\min_{\bm{\alpha} \in \Delta} \left\| \sum_{t=1}^T \alpha_t g_{(k), t} \right\|_2^2$
            \State Compute resolved update: $\mathbf{d}_{(k)} = \sum_{t=1}^T \alpha^*_t g_{(k), t}$
            
        \ElsIf{$\mathcal{M} == \text{CAGrad}$}
            \State Compute local average gradient: $\mathbf{g}^{avg}_{(k)} = \frac{1}{T} \sum_{t=1}^T g_{(k), t}$
            \State Solve for local update: 
            \State \quad $\mathbf{d}_{(k)} = \arg\max_{\mathbf{d}} \min_{t} \langle g_{(k), t}, \mathbf{d} \rangle \quad \text{s.t.} \quad \|\mathbf{d} - \mathbf{g}^{avg}_{(k)}\| \le c\|\mathbf{g}^{avg}_{(k)}\|$
            
        \ElsIf{$\mathcal{M} == \text{PCGrad}$}
            \State Initialize task updates: $\mathbf{d}_{(k), t} = g_{(k), t}$ for all $t \in [T]$
            \For{each task $t \in \{1, \dots, T\}$}
                \State Sample other tasks $j \neq t$ in a random order
                \If{$\langle \mathbf{d}_{(k), t}, g_{(k), j} \rangle < 0$}
                    \State $\mathbf{d}_{(k), t} \gets \mathbf{d}_{(k), t} - \frac{\langle \mathbf{d}_{(k), t}, g_{(k), j} \rangle}{\|g_{(k), j}\|_2^2} g_{(k), j}$
                \EndIf
            \EndFor
            \State Compute resolved update: $\mathbf{d}_{(k)} = \sum_{t=1}^T \mathbf{d}_{(k), t}$
        \EndIf
        
        \Comment{\textit{Standard ZeRO-2 Partitioning}}
        \State \texttt{Reduce-Scatter} $\mathbf{d}_{(k)}$ so each GPU keeps only its assigned partition of bucket $k$
        \State Free $G_{(k)}$ from GPU memory to prevent Out-Of-Memory (OOM)
    \EndFor
    
    \Comment{\textit{Optimizer Step}}
    \State $\mathbf{d} = [\mathbf{d}_{(1)}, \dots, \mathbf{d}_{(K)}]$
    \State Update local parameter partitions: $\bm{\theta} \gets \bm{\theta} - \eta \mathbf{d}$
\EndFor
\end{algorithmic}
\end{algorithm}

The core innovation of Algorithm \ref{alg:bucket_moo} lies in intercepting the standard distributed backward pass to resolve cross-lingual gradient conflicts locally within memory buckets. By doing so, it completely bypasses the need to materialize the full global gradient vector, preserving the memory and communication efficiency of distributed training. The execution flow proceeds as follows:

\begin{itemize}
    \item \textbf{Language-Specific Forward Pass:} Training begins with $T$ distributed GPUs, each exclusively assigned to process a micro-batch from a distinct language dataset $\mathcal{D}_t$. During the forward pass, each GPU computes a language-specific loss $\mathcal{L}_t(\bm{\theta})$.
    
    \item \textbf{Bucket-Level Backward Interception:} During the backward pass, gradients are not immediately accumulated globally. Instead, the algorithm intercepts the gradients sequentially as fixed-size parameter memory buckets ($k = 1, 2, \dots, K$) are populated.
    
    \item \textbf{Localized Synchronization (\texttt{All-Gather}):} The moment bucket $k$ is filled with local gradients, an \texttt{All-Gather} operation is triggered. This synchronizes the bucket across the distributed group, providing every GPU with the complete set of language-specific gradients for that specific bucket, denoted as $G_{(k)} = \{g_{(k), 1}, \dots, g_{(k), T}\}$.
    
    \item \textbf{In-Place Conflict Resolution:} With the local gradients gathered, each GPU independently applies the selected MOO strategy (MGDA, CAGrad, or PCGrad) strictly to the parameters within bucket $k$. This mathematical resolution yields a single, conflict-free update direction $\mathbf{d}_{(k)}$ that serves the competing language objectives simultaneously.
    
    \item \textbf{ZeRO-2 Partitioning and Memory Recovery:} To strictly adhere to ZeRO-2 memory partitioning, the algorithm immediately performs a \texttt{Reduce-Scatter} on the newly resolved update $\mathbf{d}_{(k)}$. Consequently, each GPU retains only its explicitly assigned partition of the bucket. Crucially, the full multi-language gradient set $G_{(k)}$ is then flushed from GPU memory to prevent Out-Of-Memory (OOM) failures.
    
    \item \textbf{Global Optimizer Step:} Once all $K$ buckets have been sequentially intercepted, resolved, scattered, and freed, each GPU holds a safely partitioned, conflict-resolved gradient vector for its assigned parameters. The optimizer then executes a standard step to update the local parameter partitions ($\bm{\theta} \gets \bm{\theta} - \eta \mathbf{d}$).
\end{itemize}

\section{Experiment Details}
\label{appendix:exp}

All fine-tuning experiments are implemented using the \texttt{TRL}\footnote{\url{https://github.com/huggingface/trl}} library with a maximum sequence length of 2,048 tokens, and batch size 128. Training is conducted in \texttt{bfloat16} precision using the AdamW optimizer. We apply a weight decay of 0.1, a cosine learning rate schedule with a warmup ratio of $0.03$. We applied a weight decay of 0.1 and a peak learning rate of 2e-5 for the Qwen 3 models, mirroring the 0.04 and 4e-6 used for Llama 3. Models are trained for 3 epochs during the post-training phase. All experiments were executed on a single compute node equipped with 8 NVIDIA H200 GPUs. Furthermore, for Bucket-Level CAGrad, the conflict aversion hyperparameter $c \in [0, 1)$ is set to 0.5 across all applicable runs.

\section{Training Time}
\label{app:time}
\begin{table}
\centering
\caption{Training time (hours) of \texttt{Meta-Llama-3-8B} on 8 H200 GPUS.}
\label{tab:runtime}
\scalebox{0.85}{
\begin{tabular}{l c c c c}
\toprule
Method & V. SFT & MGDA & CAGrad & PCGrad \\ \midrule
Run-time  &  $0.6$ & $0.8$ & $0.7$ & $0.8$ \\
\bottomrule
\end{tabular}
}
\end{table}

Table \ref{tab:runtime} compares the computational cost of Vanilla SFT against the gradient-based MOO methods on 8 H200 GPUs. In the Post-Training phase, the absolute time increase is minimal: MGDA and PCGrad require 0.8 hours compared to the baseline's 0.6 hours, while CAGrad introduces minimal time overhead (0.7 hours). These results demonstrate that the Bucket-Level MOO framework maintains high computational efficiency, incurring only a modest time tradeoff in exchange for the significant gains achieved in multilingual and cross-lingual performance.

\section{Multilingual Representation Analysis}
\label{app:represenation}
\begin{figure*}
    \centering
    \includegraphics[width=0.32\linewidth]{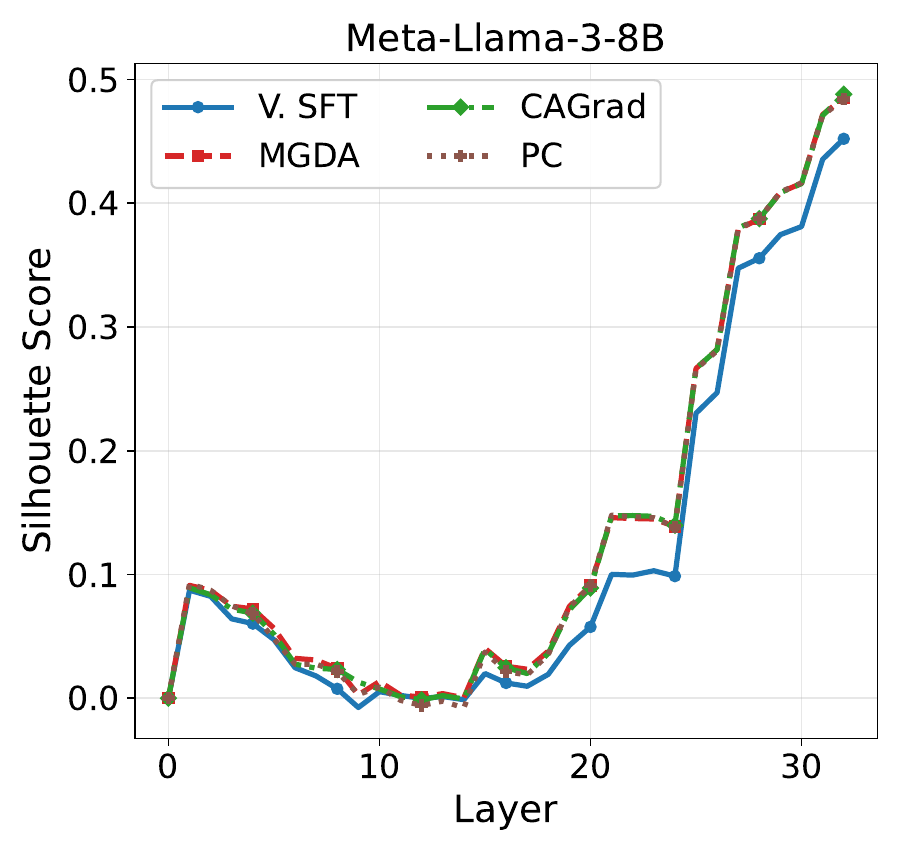}
    \includegraphics[width=0.32\linewidth]{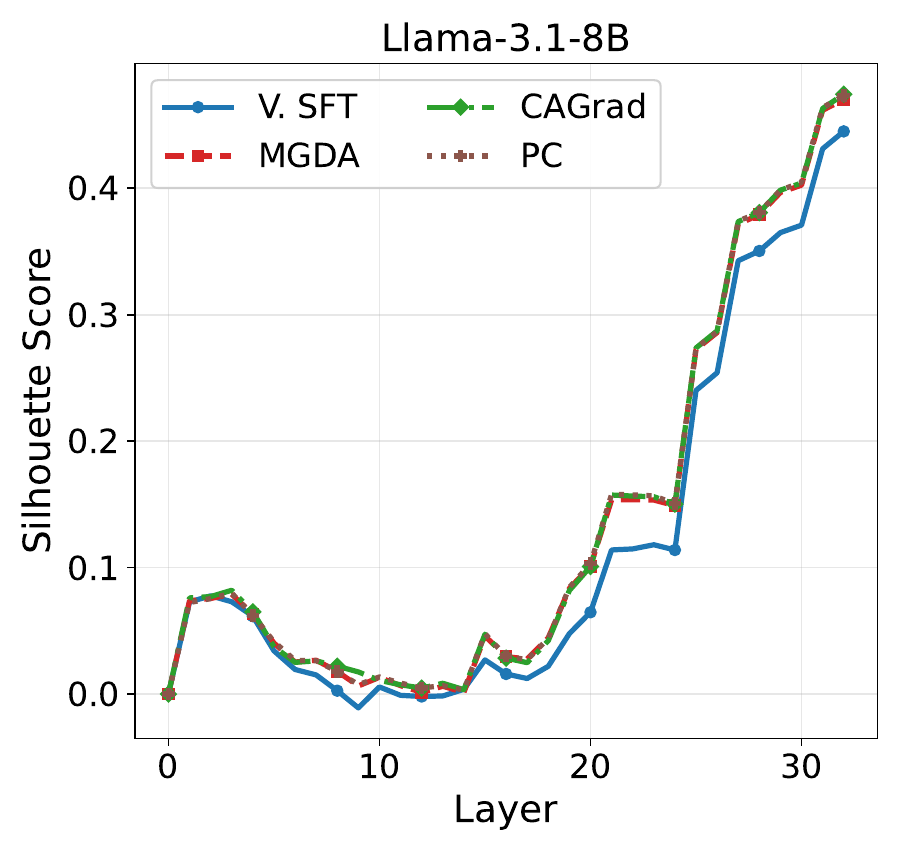}
    \includegraphics[width=0.32\linewidth]{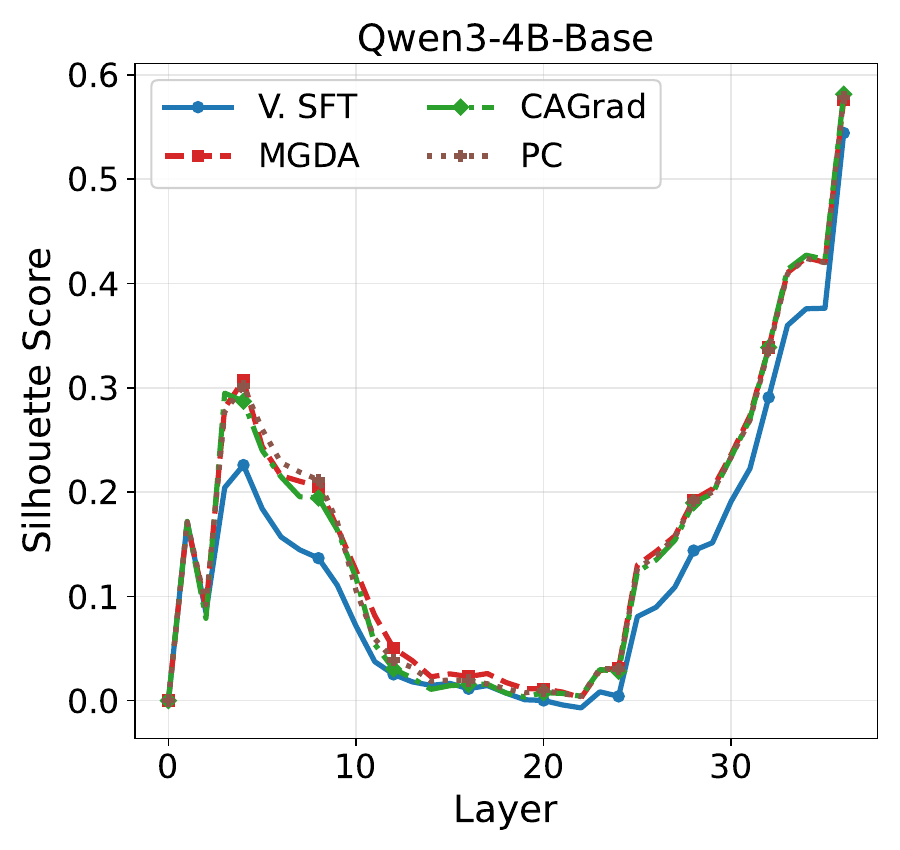}
    \caption{Layer-wise Silhouette scores measuring the representational clustering of languages.}
\end{figure*}

In this section, we provide the extended layer-wise Silhouette score analysis for the remaining base models evaluated in our study: Meta-Llama-3-8B, Llama-3.1-8B, and Qwen3-4B-Base. These results expand upon the representational clustering findings discussed in Section \ref{sec:ablation}.

Consistent with the behavior observed in Qwen3-8B-Base, all three architectures exhibit a distinct, wave-like clustering trajectory across their depth. The internal representations are relatively segregated by language in the earliest layers, become heavily entangled (yielding near-zero Silhouette scores) in the middle computational layers where shared semantic reasoning likely occurs, and finally separate into highly distinct, language-specific clusters in the terminal layers.

Crucially, these plots confirm that Bucket-Level MOO methods (MGDA, CAGrad, and PCGrad) consistently enhance language separability over Vanilla SFT across all architectures. This divergence is particularly pronounced in the final third of the network. For instance, in Llama-3.1-8B, the MOO methods open a substantial gap over the baseline in the final 10 layers.

Furthermore, while the Qwen3 models naturally exhibit stronger baseline multilingual capabilities—evidenced by their higher peak Silhouette scores (approaching 0.60 compared to Llama's 0.45)—our localized conflict resolution still consistently elevates their clustering quality. This demonstrates that mitigating gradient conflicts universally aids in carving out orthogonal, language-specific representation dimensions, regardless of the base model's inherent multilingual capacity.

\section{Language-Critical Neurons}
\label{app:neurons}

Drawing on recent interpretability and network pruning methodologies \citep{wei2024assessing, chen2025the, leng-xiong-2025-towards}, we quantify the degree of language specialization within the model by identifying critical neurons. Let $\theta_i$ denote an individual neuron (e.g., a specific row or column within a weight matrix). To evaluate its functional importance for a given language $t \in \mathcal{T}$, we approximate the expected change in the loss function $\mathcal{L}$ over a language-specific dataset $\mathcal{D}_t$ if $\theta_i$ were to be deactivated (i.e., pruned to 0). Using a first-order Taylor expansion, this impact score $\mathcal{I}_t(\theta_i)$ can be efficiently estimated as:
\begin{align}
    \mathcal{I}_t(\theta_i) = \left|\mathcal{L}(\theta_i, \mathcal{D}_t) - \mathcal{L}(0, \mathcal{D}_t)\right| \approx \left|\theta_i \nabla_{\theta_i}\mathcal{L}(\theta_i, \mathcal{D}_t)\right|
\end{align}

Let $\mathcal{N}_t$ denote the set of the top-$p$ most critical neurons for language $t$ based on their impact score $\mathcal{I}_t(\theta_i)$. In our analysis, we set the threshold to $p=1\%$ and evaluate across the language set $\mathcal{T} = \{\text{en, zh, it, ar, ko, id, bn, sw}\}$. From these language-specific sets, we define the following structural partitions:
\begin{itemize}
    \item \textbf{Activated Neurons ($\mathcal{N}_{\text{all}}$):} The union of all critical neurons across the evaluated languages, $\mathcal{N}_{\text{all}} = \bigcup_{t \in \mathcal{T}}\mathcal{N}_t$.
    \item \textbf{Language-Shared Neurons ($\mathcal{N}_{\text{shared}}$):} The intersection of critical neurons deemed essential to every language, $\mathcal{N}_{\text{shared}} = \bigcap_{t \in \mathcal{T}}\mathcal{N}_t$.
    \item \textbf{Total Neurons ($\mathcal{N}_{\text{total}}$):} The complete set of neurons (weight columns) within the evaluated model layers.
\end{itemize}

To evaluate the structural disentanglement of the model, we calculate the ratio of strictly \textit{language-specific neurons} to the total network capacity. This ratio is defined as:
\begin{align}
    R_{\text{specific}} = \frac{|\mathcal{N}_{\text{all}} \setminus \mathcal{N}_{\text{shared}}|}{|\mathcal{N}_{\text{total}}|}
\end{align}

A higher ratio indicates that the model constructs highly orthogonal, language-specific dimensions, effectively isolating linguistic representations to mitigate negative interference. Conversely, a lower ratio indicates a highly shared representation space. While extensive parameter sharing is widely recognized as beneficial for transferring universal semantics and enabling cross-lingual generalization, it simultaneously acts as a structural bottleneck. When fine-tuning across diverse languages, forcing competing linguistic constraints into this shared capacity frequently precipitates the gradient conflicts and negative interference observed in standard training paradigms.

\end{document}